\newtheorem{theorem}{Theorem}
\newtheorem*{theorem*}{Theorem}
\newtheorem{lemma}{Lemma}
\newtheorem{remark}{Remark}
\author{
Anshuka Rangi$^1$
\and
Haifeng Xu$^2$\and
Long Tran-Thanh$^3$\and
Massimo Franceschetti$^1$
\affiliations
$^1$ University of California San Diego, USA\\
$^2$University of Virginia, USA\\
$^3$University of Warwick, UK
\emails
\{arangi, mfranceschetti\}@ucsd.edu,
hx4ad@virginia.edu,
long.tran-thanh@warwick.ac.uk
}
\title{Understanding the Limits of   Poisoning Attacks in  Episodic Reinforcement Learning}
\begin{document}

\maketitle
\begin{abstract}
To understand the security threats to reinforcement learning (RL) algorithms,  this paper studies poisoning attacks to manipulate \emph{any} order-optimal learning algorithm towards a targeted policy in episodic RL and examines the potential damage  of two natural types of  poisoning attacks, i.e., the manipulation of \emph{reward} and \emph{action}. We discover that the effect  of attacks crucially depend on whether the rewards are bounded or unbounded. In bounded reward  settings, we  show that only reward manipulation or only action manipulation cannot guarantee a successful attack. 
However,  by combining   reward and action manipulation,  the adversary can  manipulate any order-optimal learning algorithm to follow any targeted policy  with $\tilde{\Theta}(\sqrt{T})$ total attack cost, which is order-optimal,
without  any knowledge of the underlying MDP.
In contrast, in unbounded reward  settings, we show that reward manipulation attacks are sufficient for an adversary to successfully  manipulate any order-optimal learning algorithm to follow any targeted policy  using $\tilde{O}(\sqrt{T})$ amount of contamination. Our results reveal useful insights about what can or cannot be achieved by   poisoning attacks, and are set to spur more works on the design of robust RL algorithms. 
\end{abstract}

\section{Introduction}
Learning algorithms have been widely used in web services \cite{zhao2018deep},  conversational AI \cite{dhingra2016towards},  UAV coordination \cite{venugopal2021reinforcement},   medical trials \cite{badanidiyuru2018bandits,rangi2019online}, and   crowdsourcing systems \cite{rangi2018multi}. The distributed nature of these applications makes these algorithms prone to third party attacks. For example, in web services  decision making critically depends on   reward collection, and this is prone to attacks that can impact observations and monitoring, delay or temper rewards, produce link failures, and generally modify or delete information through hijacking of communication links   \cite{agarwal2016making,cardenas2008secure}. { Making} these systems secure requires an understanding of the regime where the systems may be vulnerable, and designing ways to mitigate these attacks. This paper focuses on the former aspect, namely  understanding of the regime where the systems can be attacked, in   episodic Reinforcement Learning (RL). 


We consider a \emph{man in the middle} (MITM) attack. In this attack, there are three entities: the environment, the learner (RL algorithm), and the adversary. The learner interacts with the environment for $T$ episodes, and each episode has $H$ steps. 
In episode $t\leq T$ at step $h\leq H$, the learner observes the state $s_t(h)\in\mathcal{S}$ of the environment, selects an action $a_t(h) \in \mathcal{A}$, 
the environment then generates a reward $r_t(s_t(h),a_t(h))$ and changes its state based on an underlying Markov Decision Process (MDP), and attempts to communicate the new state to the learner. However, an adversary acts as a ``man in the middle'' between the learner and the environment. It can observe and may \emph{manipulate the action} $a_t(h)$ to $a^o_t(h)\in \mathcal{A}$ which will generate reward $r_t(s_t(h),a^o_t(h))$ corresponding to the manipulated action. Additionally, the adversary may also \emph{intercept the reward   $r_t(s_t(h),a^o_t(h))$} by adding contamination noise {$\epsilon_{t,h}(s_t(h),a_t(h))$}. With both attacks,  the learner ends up observing the contaminated reward $r^o_t(s_t(h),a_t(h))=r_t(s_t(h),a^o_t(h))+\epsilon_{t,h}(s_t(h),a_t(h))$. The cost of attack is measured as the \emph{amount of contamination} $\sum_{t,h}|\epsilon_{t,h}(s_t(h),a_t(h))|$ and   \emph{number of action manipulations} $\sum_{t,h}\mathbf{1}(a_t(h)\neq a^o_t(h))$, respectively.    Notably, with the wide application of RL today, MITM attack is a realistic concern to the   vulnerability of RL algorithms and is thus important to understand. For instance, RL-based UAV coordination to reduce poaching activities in conservation areas is naturally subject to poachers' poisoning attacks, which can falsify the reward feedback   (i.e., reward manipulation) and executed actions (i.e., action manipulation) \cite{venugopal2021reinforcement}; similarly, RL algorithms for recommender systems are subject to   attacks from hackers or competitors \cite{zhao2018deep}.   

\begin{table*}[!h]
\centering
\begin{threeparttable}

\begin{tabular}{ |p{4.5cm}|p{2cm}|p{5cm}|p{3.75cm}| } 
 \hline
 Settings & Reward & Attack & Bound on Attack Cost \\ 
 
 \hline
   
 White-box in  infinite horizon \tnote{2}\tablefootnote{ Performance of algorithms in infinite horizon RL is studied by analyzing the performance over first $T$ rounds  \cite{wu2021nearly} } RL& Unbounded &Reward Manipulation & $\tilde O(\sqrt{T})$ \qquad\qquad\qquad\cite{rakhsha2020policy}\tnote{3}\\
 \hline
 White-box in  {infinite horizon} RL& Unbounded &Dynamics Manipulation ( under sufficient conditions only)& $\tilde O(\sqrt{T})$ \qquad\qquad\qquad \cite{rakhsha2020policy}\tnote{3}\footnote{This result is not explicitly stated, and is derived for order-optimal algorithms from the reference}\\
 \hline
Black-box in  infinite horizon RL&  {Unbounded} & {Reward Manipulation ( under a setting with $L$ learner)}&  $\tilde{O}(T\log L+L\sqrt{T})$ \qquad\qquad\qquad \cite{rakhsha2021reward}\tnote{3}\\
 \hline
  Black-box in episodic RL& Unbounded &Reward Manipulation & $\tilde{O}(\sqrt{T})$ \quad(This work)\\
  \hline
  Black-box and White Box in  RL& Bounded &Reward Manipulation & Infeasible (This work)\\
 \hline
 Black-box and White Box in  RL& Bounded &Action Manipulation & Infeasible (This work)\\
  \hline
   Black-box and White Box in RL & Bounded &Reward and Action Manipulation & $\tilde{\Theta}(\sqrt{T})$ \quad(This work) \\
 \hline
\end{tabular}
\vskip -8pt
\caption{Comparison of the attack cost in the episodic RL and infinite RL setting when the adversary does nott know the RL algorithm.}
\label{table:comparison_1}
\begin{tablenotes}
\small{
\item[2] Performance of algorithms in infinite horizon RL is studied by analyzing the performance over first $T$ rounds \cite{wu2021nearly} 
\item[3] This result is not explicitly stated, and is derived for order-optimal algorithms from the reference.}
\end{tablenotes}
\end{threeparttable}
\vskip -13pt
\end{table*}

Reward poisoning attack is a special case of the MITM attack where $a^o_t(h)=a_t(h)$, and has been widely studied   in both RL and Multi-Armed Bandits (MAB) settings \cite{jun2018adversarial,rakhsha2020policy,rangi2022saving}.
Likewise, action manipulation attack is another special case of the MITM attack where $\epsilon_{t,h}(s_t(h),a_t(h))=0$, and has been previously studied for MAB setting \cite{liu2020action}. Another variant of action manipulation attack, recently studied in RL \cite{rakhsha2020policy}, is   manipulation of the transition dynamics. This can be  considered as manipulating the action $a_t(h)$ to another action,   not necessarily in $\mathcal{A}$. Therefore, the attacker has strictly stronger power there than the action manipulation as in our setting. MITM attacks has also been previously considered in cyber-physical systems \cite{rangi2020learning,khojasteh2020learning}. 


We consider MITM attacks in two different settings:  \emph{unbounded} rewards and \emph{bounded} rewards, which turns out to differ fundamentally. In unbounded reward setting, the contamination $\epsilon_{t,h}(s_t(h),a_t(h))$ is unconstrained whereas in bounded reward setting, the contaminated reward $r^o_t(s_t(h),a^o_t(h))$ is constrained to be in the interval $[0,1]$, just like the original rewards $r_t(s_t(h),a^o_t(h))$. This constrained situation limits the attacker's contamination at every round, and turns out to be provably  more difficult to attack. In each setting, we study attack strategy in more realistic black-box setting in which the attacker does not know, and needs to learn, the underlying MDP as well. A similar study between the two settings has been performed for MAB in \cite{rangi2022saving}, where it is shown that bounded rewards setting is more difficult to attack in comparison to unbounded reward setting. This paper extends the study from MAB to RL, and provides new insights on the feasibility of attacks. 

\subsection{Summary of Contributions}
We consider poisoning attacks with the objective of forcing the learner to execute a target policy $\pi^+$. More specifically,  for all $h\leq H$ and $s\in \mathcal{S}$,  if $\pi_h(s)\neq \pi_h^+(s)$, then the attack aims to induce values satisfying 
\begin{equation}\label{eq:objAttack1}
    \tilde{V}_h^{\pi}(s)< \tilde{V}_h^{\pi^+}(s),
\end{equation}
where  $a)$ policy $\pi$ of an agent is a collection of $H$ functions $\{\pi_h:\mathcal{S}\to\mathcal{A}\}$; and $b)$ value function $\tilde{V}_h^\pi(s)$ is the expected reward of state $s$ under policy $\pi$ using contaminated reward observation, between step $h$  until step $H$.


{ We propose the first set of efficient \emph{black-box} online attacks to any order-optimal episodic RL algorithms in both bounded and unbounded reward settings.}  Specifically, for the bounded reward setting,  
 we show that mere reward manipulation   does not guarantee successful attacks; namely there exist an MDP and a target policy $\pi^+$  which cannot be attacked --- i.e., \eqref{eq:objAttack1} cannot be achieved --- by manipulating only the rewards.  
 Similarly, we show that action manipulation attack does not suffice by showing the existence of  an MDP and a target policy $\pi^+$ which cannot be attacked by manipulating   actions. Hence, to guarantee a successful attack,  the attacker needs a combined the power of reward and action manipulation. Indeed, we propose an MITM attack in bounded reward setting, which requires $\tilde{O}(\sqrt{T})$ \footnote{Throughout the paper, $\tilde{\Theta}(\cdot)$ and $\tilde{O}(\cdot)$ notation omits the  logarithmic terms. } amount of reward contamination and $\tilde{O}(\sqrt{T})$ number of action manipulations to attack any order-optimal learning algorithm. We also show that this attack cost, namely sum of amount of reward contamination and  number of action manipulations,  is order-optimal. 

Next we move to unbounded reward settings. Reward manipulation attack for unbounded rewards has been studied  in \cite{rakhsha2020policy}. However, the work investigated \emph{infinite-horizon} RL and proposed a  \emph{white-box}  attack. 
Extending this research agenda, we propose \emph{black-box} attack for \emph{episodic} RL, and show that our proposed attack can attack any order-optimal learning algorithm in $\tilde{O}(\sqrt{T})$ amount of contamination. 
An interesting conceptual message from our results is that  bounded reward setting is more difficult to attack than the unbounded reward setting. 
Our results are compared with the relevant literature in  Table \ref{table:comparison_1}.

\section{Problem Formulation}
We consider  the episodic Markov Decision Process (MDP), denoted by $(\mathcal{S},\mathcal{A}, H,\mathcal{P},\mu)$, where $\mathcal{S}$ is the set of states with $|\mathcal{S}|=S$, $\mathcal{A}$ is the set of actions with $|\mathcal{A}|=A$, $H$ is the number of steps in each episode, $\mathcal{P}$ is the transition metric such that $\mathbb{P}(.|s,a)$ gives the transition  distribution over the next state if action $a$ is taken in the current state $s$, and $\mu:\mathcal{S}\times\mathcal{A}\to \mathbb{R}$ is the expected reward of state action pair $(s,a)$. {To avoid cumbersome  notations, we work with the stationary MDPs here with the same reward  and transition functions at each $h\leq H$. However all our analysis extend trivially to non-stationary MDPs, with just more involved notations.\footnote{Moreover, if an online learning is already vulnerable for this ``nice'' special case, then it must be  vulnerable in general as well since the stationary MDP is a special case of general MDPs.} }

An RL agent (or learner) interacts with the MDP for $T$ episodes, and each episode consists of $H$ steps.
In each episode of the MDP, an initial state $s_t(1)$  can be fixed or selected from any distribution. In episode $t$ and step $h$, the learner observes the current state $s_t(h)\in\mathcal{S}$, selects an action $a_t(h)\in\mathcal{A}$, and incurs a noisy reward $r_{t,h}(s_t(h),a_t(h))$. 
Also, we have $\mathbb{E}[r_{t,h}(s_t(h),a_t(h))]=\mu(s_t(h),a_t(h))$. Our results can also be extended to the setting where reward is a function of step $h\leq H$. Finally, both $\mathcal{P}$ and $\mu$ are unknown to the learner and the attacker.

We consider episodic RL under MITM attacks.  The attacker can manipulate the action $a_t(h)$ selected by the learner to another action $a^o_t(h)\in \mathcal{A}$. The   MDP thus undergoes transition to next state based on the 
action  $a^o_t(h)$, namely the next state is drawn from the  distribution $\mathbb{P}(.|s_t(h), a^o_t(h))$.  The reward observation $r_t(s_t(h),a^o_t(h))$ is generated. If $a^o_t(h)\neq a_t(h)$, then the episode $t$ and step $h$ is said to be \emph{under action manipulation attack}. Hence, the \emph{number of action manipulations} is $\sum_{t=1}^T\sum_{h=1}^H\mathbf{1}(a^o_t(h)\neq a_t(h))$. 

The adversary can also intercept the realized reward   $r_t(s_t(h),a^o_t(h))$ and contaminate it by adding noise $\epsilon_{t,h}(s_t(h),a_t(h))$. Learner thus observes reward   
\begin{equation}\label{eq:reward_observation}
    r^o_{t,h}(s_t(h),a_t(h))=r_{t,h}(s_t(h),a^o_t(h))+\epsilon_{t,h}(s_t(h),a_t(h)),
\end{equation}
where the contamination $\epsilon_{t,h}(s_t(h),a_t(h))$ added by the attacker is a function of the entire history, including all the states visited previously and all the actions selected previously by the learner and the attacker. 
If $\epsilon_{t,h}(s_t(h),a_t(h))\neq 0$, then the episode $t$ and step $h$ is said to be \emph{under reward manipulation attack}. Hence, the \emph{number of reward manipulations} is $\sum_{t=1}^T\sum_{h=1}^H\mathbf{1}(\epsilon_{t,h}(s_t(h),a_t(h))\neq 0)$, and the \emph{amount of contamination} is 
$\sum_{t=1}^T\sum_{h=1}^H |\epsilon_{t,h}(s_t(h),a_t(h))|$. Notably,   \emph{reward manipulation attack} is a special case of MITM attack, where the adversary cannot manipulate the action (i.e., $a_t(h)=a^o_t(h)$). \emph{Action manipulation attack} is also special case of MITM, in which the adversary cannot contaminate the reward observation (i.e., $\epsilon_{t,h}(s_t(h),a_t(h))=0$).

A (deterministic) policy $\pi$ of an agent is a collection of $H$ functions $\{\pi_h:\mathcal{S}\to\mathcal{A}\}$. The value function $V_h^\pi(s)$ is the expected reward under policy $\pi$, starting from state $s$ at step $h$, until the end of the episode, namely
\begin{equation}\label{eq:ValueFunc}
    V_h^\pi(s)=\mathbb{E}\big[\sum_{h^\prime=h}^H \mu(s_{h^\prime},\pi_{h^\prime}(s_{h^\prime}))|s_h=s\big],
\end{equation}
where $s_{h^\prime}$ denotes the state at step $h^\prime$. Likewise, the $Q$-value function $Q_{h}^\pi(s,a)$ is the expected reward under policy $\pi$, starting from state $s$ and action $a$, until the end of the episode, namely
\begin{equation}\label{eq:Qfunnc}
\begin{split}
    Q^\pi_h(s,a)
    &=\mathbb{E}[\sum_{h^\prime=h+1}^H \mu(s_{h^\prime},\pi_{h^\prime}(s_{h^\prime}))|s_h=s, a_h=a]\\
    &\qquad +\mu(s,a),
\end{split}
\end{equation}
where $a_{h^\prime}$ denotes the action at step $h^\prime$. Since $S$, $A$ and $H$ are finite, there exists an optimal policy $\pi^*$ such that $V_h^{\pi^*}(s)=\sup_{\pi}V_h^{\pi}(s)$. 
The regret $R^\mathcal{A}(T,H)$ of any algorithm $\mathcal{A}$ is the difference between the total expected true reward from the best fixed policy $\pi^*$ in the hindsight, and the expected true reward over $T$ episodes, namely
\vskip -15pt
\begin{equation}
    R^{\mathcal{A}}(T,H)= \sum_{t=1}^T \big(V_1^{\pi^*}(s_t(1))-V_1^{\pi_t}(s_t(1))\big). 
\end{equation}
\vskip -3pt
The objective of the learner is to minimize the regret $R^{\mathcal{A}}(T,H)$. In contrast,  the objective of the attacker is to poison the environment to teach/force the learner to execute a target policy $\pi^+$ by achieving the following objective:  for all $h\leq H$, $s\in \mathcal{S}$ and any policy $\pi$, 
\vskip -8pt
\begin{equation}\label{eq:ObjectAttacker}
  \text{if $\pi_h(s)\neq \pi_h^+(s)$, then }  \tilde{V}_h^{\pi}(s)< \tilde{V}_h^{\pi^+}(s),
\end{equation}
\vskip -3pt
\noindent
 where $\tilde{V}_h^{\pi}(s)$ is the  expected  reward  in  states based on the  reward observation in \eqref{eq:reward_observation} under  policy $\pi$.
 Consequently, the policy $\pi^+$ will be executed for $\Omega(T)$ times under the attack. 
 

 
\section{Attacks in Bounded Reward Setting}
In this section, we investigate bounded reward setting, i.e.,  $r_{t,h}(s,a)\in [0,1]$ with mean $\mu(s,a)\in (0,1]$ for all $(s,a)\in \mathcal{S}\times\mathcal{A}$.\footnote{The $\mu(s,a) \not = 0$ is to avoid   minor technicality issues due to tie breaking. } 
We first show in subsection \ref{sec:bounded:insufficiency} that there exist MDPs and target policies $\pi^+$ such that  the objective of the attacker, namely \eqref{eq:ObjectAttacker}, cannot be achieved by only reward manipulation attack or only action manipulation attack.  In subsection \ref{sec:bounded:combined}, we show that combined reward and action manipulation suffice for a successful attack.  

\subsection{Insufficiency of (Only) Reward or Action Manipulation} \label{sec:bounded:insufficiency}
Similar to \cite{rakhsha2020policy}, attacker is subject to a constraint that the reward manipulation (i.e.,  $\epsilon_{t,h}(s_t(h),a_t(h))\neq 0$) and action manipulation (i.e.,  $a_t(h)\neq a^o_t(h)$) can occur only if the 
selected action  is different from the desired action, namely $a_t(h)\neq \pi^+_h(s_t(h))$.Intuitively, it can also be interpreted as the attacker can manipulate rewards (or actions) whenever the learner's action is not in accordance with the attacker's desired policy $\pi^+$.
Therefore, in the reward manipulation attack, the attacker is subject to following constraints
\vskip -10pt
\begin{equation}\label{eq:rewardManiConst}
\begin{split}
     &r^o_t(s_t(h),a_t(h))= r_t(s_t(h),a_t(h)) \mbox{ if } a_t(h)= \pi^+_h(s_t(h)),\\
     &\mbox{ and } r^o_t(s_t(h),a_t(h))\in [0,1],
\end{split}
\end{equation}
\vskip -15pt
or equivalently, 
\begin{equation}
\begin{split}
     &\epsilon_t(s_t(h),a_t(h))= 0 \mbox{ if } a_t(h)= \pi^+_h(s_t(h)),  \mbox{ and }\\
    & \epsilon_t(s_t(h),a_t(h))\in [-r_t(s_t(h),a_t(h)),1-r_t(s_t(h),a_t(h))].
\end{split}
\end{equation}
\vskip -5pt
This constraint is crucial for obtaining sub-linear attack cost due to the following reason. Suppose the objective in \eqref{eq:ObjectAttacker} is achieved by contaminating the reward of action $\pi^+_h(s_t(h))$. Now since the learner would execute the policy $\pi^+$ with high probability, namely $\Omega(T)$ times,  the total contamination will thus grow linearly with  $T$.  This constraint (or strategy of not contamination $\pi^+_h(s_t(h))$) is also applied in the previous literature in RL  \cite{rakhsha2020policy} and MAB \cite{jun2018adversarial,ma2019data,rangi2022saving}, where the reward manipulation are performed only on non-desirable actions. 
%
Similarly, in the action manipulation attack, we assume that 
\vskip -8pt
\begin{equation}\label{eq:actionManiConst}
    a^o_t(h)=  a_t(h)\mbox{ if } a_t(h)= \pi^+_h(s_t(h)).
\end{equation}
\vskip -3pt
That is, the action can be manipulated only if the selected action is not the same as the target action of the policy. 
Our first result establishes that only the reward manipulation or only the action manipulation cannot always guarantee successful attacks in bounded reward setting.  
\begin{theorem}\label{thm:infeasibleRewardMani}
In bounded reward setting,  
\begin{enumerate}
    \item there exists an MDP and a target policy $\pi^+$ such that any reward manipulation attack satisfying \eqref{eq:rewardManiConst} cannot be successful, namely achieve the objective in \eqref{eq:ObjectAttacker}.
    \item there exists an MDP and a target policy $\pi^+$ such that any action manipulation attack satisfying \eqref{eq:actionManiConst} cannot be successful, namely achieve the objective in \eqref{eq:ObjectAttacker}.
\end{enumerate}
\end{theorem}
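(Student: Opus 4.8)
The plan is to prove both parts via a single explicit counterexample: a small stationary MDP together with a target policy $\pi^+$ for which the attacker's constraint provably blocks objective \eqref{eq:ObjectAttacker}. The unifying observation is that under either \eqref{eq:rewardManiConst} or \eqref{eq:actionManiConst} the attacker can never alter the reward obtained from, nor the transition caused by, a target action $\pi_h^+(s)$; hence for \emph{every} attack strategy the contaminated value of $\pi^+$ itself is untouched, $\tilde V_h^{\pi^+}(s)=V_h^{\pi^+}(s)$. It then suffices to build an MDP in which some one-step deviation from $\pi^+$ has a contaminated value that the attacker cannot drive strictly below $V_h^{\pi^+}(s)$.

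Concretely, I would take $H=2$, states $\{s_0,x,y\}$ with fixed initial state $s_0$, actions $\{a,b\}$, and deterministic transitions $s_0\xrightarrow{a}x$ and $s_0\xrightarrow{b}y$. Fix the target policy by $\pi_1^+(s_0)=a$, $\pi_2^+(x)=a$, $\pi_2^+(y)=a$, set $\mu(s_0,a)=\mu(x,\cdot)=\mu(s_0,b)=\mu(y,b)=\delta$ for a small $\delta\in(0,\tfrac12)$, and $\mu(y,a)=1$; so $\pi^+$ crawls through a low-reward region, $\tilde V_1^{\pi^+}(s_0)=V_1^{\pi^+}(s_0)=2\delta$, while deviating to $b$ at $s_0$ reaches $y$, from which $\pi^+$ collects reward $1$. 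All remaining means can be placed arbitrarily in $(0,1]$.

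For part 1 (reward manipulation), let $\pi$ agree with $\pi^+$ except $\pi_1(s_0)=b$. Since \eqref{eq:rewardManiConst} forces the contaminated reward into $[0,1]$, the expected contaminated reward at step $1$ is $\ge 0$; since $\pi_2^+(y)$ is a target action at $y$ it cannot be contaminated, contributing mean $1$. Thus $\tilde V_1^{\pi}(s_0)\ge 0+1=1>2\delta=\tilde V_1^{\pi^+}(s_0)$ for every reward-manipulation attack, contradicting \eqref{eq:ObjectAttacker}. For part 2 (action manipulation), take the same $\pi$; a non-target action may be redirected only within $\mathcal{A}$, so the attacker sends $b$ to some $a^o\in\{a,b\}$ and, because $\pi$ rejoins $\pi^+$ afterwards, $\tilde V_1^{\pi}(s_0)=Q_1^{\pi^+}(s_0,a^o)$, the true $Q^{\pi^+}$-value. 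As $a=\pi_1^+(s_0)$ minimizes $Q_1^{\pi^+}(s_0,\cdot)$ here ($2\delta$ versus $1+\delta$), we get $\tilde V_1^{\pi}(s_0)\ge 2\delta=\tilde V_1^{\pi^+}(s_0)$, so the strict inequality in \eqref{eq:ObjectAttacker} again fails for every action-manipulation attack.

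I expect the main difficulty to be choosing $\pi^+$ and the MDP so that the attacker's residual power is genuinely useless: arranging that a deviation ``merges back'' into a region traversed by $\pi^+$ whose reward is unsuppressable under the hard floor $r^o\ge 0$ of the bounded setting (precisely the constraint that is absent, hence the attack feasible, in the unbounded setting), and simultaneously pinning the target action to the bottom of $\pi^+$'s own $Q$-ordering so that redirection cannot help either. Verifying that the MDP is well-formed (stationary, means in $(0,1]$, the deviating policy correctly isolated) and that the displayed value identities are exact is then routine. I would also remark, anticipating the next subsection, that combining the two capabilities removes both obstructions at once---action redirection defeats the merge-back escape, while reward contamination of the redirected action defeats the $Q$-ordering floor.
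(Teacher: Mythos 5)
Your proposal is correct and follows essentially the same route as the paper: both construct a small explicit MDP with $H=2$ and a one-step deviating policy that rejoins $\pi^+$, so that (i) the $[0,1]$ floor plus the non-manipulability of target actions prevents suppressing the deviation's value below $\tilde V^{\pi^+}$, and (ii) redirecting the deviating action to the target action yields exactly $\tilde V^{\pi^+}$, defeating the required strict inequality. The only differences are cosmetic (a three-state tree with parameter $\delta$ versus the paper's two-state cyclic MDP with fixed means $0.25$, $0.6$, $1$).
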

 

 The proof of Theorem \ref{thm:infeasibleRewardMani} proceeds by constructing a MDP such that there exists a  target policy $\pi^+$ and another policy $\tilde{\pi}$  such that $\tilde{V}_h^{\tilde{\pi}}(s)>\tilde{V}_h^{{\pi^+}}(s)$ irrespective of action or reward manipulation. This implies that the constructed target policy $\pi^+$ can not be induced in the constructed MDP.\footnote{Note that such  \emph{worst-case} analysis is typical for establishing lower bounds on regret, which identifies difficult instance and shows the impossibility
 of a good regret (e.g., \cite{bogunovic2020stochastic}). }

\subsection{Efficient Attack by Combining Reward \& Action Manipulation}\label{sec:bounded:combined}
We now show that the attacker can achieve its objective by combining the strength of both reward manipulation and action manipulation attacks in bounded reward setting. 
 This is the first attack strategy of its kind, using both reward and action manipulation, proposed in the literature. 

Given the target policy $\pi^+$, for all $s_t(h)\in\mathcal{S}$, $a_t(h)\in\mathcal{A}$ and $h\leq H$, we consider the following attack strategy  
\vskip -5pt
\begin{equation}\label{eq:trasitionCorruption_unin}
    a^o_t(h)=\begin{cases} a_t(h)&\mbox{ if } a_t(h)= \pi_h^+(s),\\
      \pi_h^+(s) &\mbox{ if }  a_t(h)\neq \pi_h^+(s),
    \end{cases}
\end{equation}
\vskip -5pt
\begin{equation}\label{eq:RL_fullInfo2_uninb}
    r_t^o(s_t(h), a_t(h))=\begin{cases} r_t(s_t(h),a_t(h))&\mbox{ if } a_t(h)= \pi_h^+(s),\\
      0 &\mbox{ if } a_t(h)\neq \pi_h^+(s).
    \end{cases}
\end{equation}
In the above attack, the adversary manipulates both the action and the reward observation when $a_t(h)\neq \pi^+_h(s)$. Specifically, the adversary manipulates the action to $\pi^+_h(s)$ to control the transition dynamics, and at the same time  manipulates the reward observation to zero so that the action $a_t(h)$ appears to be sub-optimal in comparison to the action  $\pi^+_h(s)$.{In this attack,
we carefully “coordinate” action and reward manipulation to achieve two goals simultaneously: $(1)$ the target policy is
 optimal; $(2)$ the target policy is the greedy solution of the resultant MDP, namely the action yielding maximum
reward at current step $h$ is also the action yielding maximum cumulative reward between 
$h$ and $H$. }


The following theorem shows that 
 the novel attack in \eqref{eq:trasitionCorruption_unin} and \eqref{eq:RL_fullInfo2_uninb} can achieve the objective in \eqref{eq:ObjectAttacker}, and does not require learning the parameters of the MDP. Additionally, the attack cost is $\tilde{O}(\sqrt{T})$ for any order-optimal learning algorithm.
\begin{theorem}\label{thm:boundedBlackBox}
Consider any learning algorithm $\mathcal{A}$ such that its regret in the absence of attack is 
\vskip -5pt
\begin{equation}\label{eq:regA_13}
    R^{\mathcal{A}}(T,H)=\tilde O(\sqrt{T}H^{\alpha}) \qquad \forall T\geq t_0, 
\end{equation}
with probability at least $1-\delta$, where $\alpha\geq 1$ is a numerical constant. For any sub-optimal target policy $\pi^+$, if an adversary follows the   strategy in \eqref{eq:trasitionCorruption_unin} and \eqref{eq:RL_fullInfo2_uninb} to attack the   algorithm $\mathcal{A}$,  then with probability at least $1-\delta$,  the following statements hold simultaneously:
\begin{enumerate}
    \item  The attacker achieves its objective in \eqref{eq:ObjectAttacker} and moreover $\sum_{t=1}^T\sum_{h=1}^H\mathbf{1}(a_t(h)= \pi_h^+(s_{t}(h)))=\Omega(T);$ 
    \item The number of reward manipulation attacks, namely $\sum_{t=1}^T\sum_{h=1}^H\mathbf{1}(\epsilon_{t,h}(s_t(h),a_t(h))\neq 0)$ is $\tilde O\big({\sqrt{T}H^\alpha}/\min_{h,s}\mu(s,\pi^+_h(s))\big)$ 
 \item The amount of reward contamination, namely $\sum_{t=1}^T\sum_{h=1}^H|\epsilon_{t,h}(s_t(h),a_t(h))|$, is $\tilde O\big({\sqrt{T}H^\alpha}/\min_{h,s}\mu(s,\pi^+_h(s))\big)$ 
 \item The number of action manipulation attacks, namely $\sum_{t=1}^{T}\sum_{h=1}^H\mathbf{1}(a^o_t(h)\neq a_t(h))$, is $\tilde O\big({\sqrt{T}H^\alpha}/{\min_{h,s}\mu(s,\pi^+_h(s))}\big)$.
\end{enumerate}
\end{theorem}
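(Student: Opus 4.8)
The plan is to reduce the attacked interaction to an \emph{attack-free} interaction with a suitably defined surrogate MDP, and then invoke the learner's own regret guarantee. Under the attack \eqref{eq:trasitionCorruption_unin}--\eqref{eq:RL_fullInfo2_uninb}, no matter which action $a$ the learner plays in state $s$ at step $h$, the environment transitions according to the kernel $\mathbb{P}(\cdot\mid s,\pi^+_h(s))$ and the learner observes $r_{t,h}(s,\pi^+_h(s))\in[0,1]$ if $a=\pi^+_h(s)$ and the deterministic value $0$ otherwise. Hence the learner's observations have exactly the law of an honest run of the episodic MDP $\widetilde M$ with \emph{action-independent} transition $\mathbb{P}(\cdot\mid s,\pi^+_h(s))$ at step $h$ and mean reward $\tilde\mu_h(s,a)=\mu(s,\pi^+_h(s))\,\mathbf 1(a=\pi^+_h(s))\in[0,1]$. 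Since $\widetilde M$ is a legitimate (step-dependent) bounded-reward episodic MDP, hypothesis \eqref{eq:regA_13} applies to the learner run inside $\widetilde M$; and since the attack rule only uses $\pi^+$, no knowledge of $\mathcal P$ or $\mu$ is needed.

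First I would analyze $\widetilde M$. Because its transitions ignore the action, the trajectory $s=s_h,\dots,s_H$ from any $(s,h)$ has the \emph{same} (the $\pi^+$-induced) law under every policy, so $\tilde V_h^{\pi}(s)=\mathbb E\big[\sum_{h'=h}^H \mu(s_{h'},\pi^+_{h'}(s_{h'}))\,\mathbf 1(\pi_{h'}(s_{h'})=\pi^+_{h'}(s_{h'}))\,\big|\,s_h=s\big]$. This is maximized termwise by $\pi=\pi^+$, so $\pi^+$ is optimal in $\widetilde M$; and if $\pi_h(s)\ne\pi^+_h(s)$, the step-$h$ term falls from $\mu(s,\pi^+_h(s))>0$ to $0$ while no later term rises, yielding $\tilde V_h^{\pi}(s)<\tilde V_h^{\pi^+}(s)$, which is the attacker's objective \eqref{eq:ObjectAttacker}. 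Thus the regret \eqref{eq:regA_13} inside $\widetilde M$ is measured against $\pi^+$: with probability $\ge 1-\delta$, $\sum_{t=1}^T\big(\tilde V_1^{\pi^+}(s_t(1))-\tilde V_1^{\pi_t}(s_t(1))\big)=\tilde O(\sqrt T H^{\alpha})$.

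Next I would convert regret into attack cost. Both a reward and an action manipulation occur at $(t,h)$ exactly when $a_t(h)\ne\pi^+_h(s_t(h))$; let $N_t$ count such steps in episode $t$. The realized episode-$t$ states follow the $\pi^+$-law and $a_t(h)=\pi_{t,h}(s_t(h))$, so conditioning on the history through the choice of $\pi_t$ and on $s_t(1)$, the value-function formula gives $\tilde V_1^{\pi^+}(s_t(1))-\tilde V_1^{\pi_t}(s_t(1))\ge \mu_{\min}\,\mathbb E[N_t\mid\text{hist}]$, where $\mu_{\min}:=\min_{h,s}\mu(s,\pi^+_h(s))$. Summing and using the regret bound, $\sum_t\mathbb E[N_t\mid\text{hist}]\le\tilde O(\sqrt T H^{\alpha})/\mu_{\min}$ on the regret event; and since $N_t-\mathbb E[N_t\mid\text{hist}]$ is a martingale difference bounded by $H$, Azuma--Hoeffding gives $\sum_t N_t\le\sum_t\mathbb E[N_t\mid\text{hist}]+\tilde O(H\sqrt T)$ with probability $\ge 1-\delta$. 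Hence, after a union bound and rescaling $\delta$, with probability $\ge 1-\delta$ we obtain $\sum_t N_t=\tilde O(\sqrt T H^{\alpha}/\mu_{\min})$, absorbing the $\tilde O(H\sqrt T)$ term via $\mu_{\min}\le 1$ and $\alpha\ge 1$. Claims 2--4 follow at once: the number of action manipulations equals $\sum_t N_t$, the number of reward manipulations is at most $\sum_t N_t$, and $\sum_{t,h}|\epsilon_{t,h}|\le\sum_t N_t$ since every nonzero $|\epsilon_{t,h}|=r_{t,h}(s_t(h),\pi^+_h(s_t(h)))\le 1$; and $\sum_{t,h}\mathbf 1(a_t(h)=\pi^+_h(s_t(h)))=TH-\sum_t N_t=\Omega(T)$ for $T$ large, completing claim 1 alongside \eqref{eq:ObjectAttacker}.

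The main obstacle is the middle of the argument: making rigorous that the attack renders the learner's view statistically indistinguishable from an honest run of $\widetilde M$ (so that \eqref{eq:regA_13} transfers verbatim), and exploiting the action-independence of $\widetilde M$'s dynamics to lower-bound the per-episode regret by $\mu_{\min}$ times the conditional expectation of the \emph{realized} count $N_t$. The remaining steps --- optimality of $\pi^+$ in $\widetilde M$, checking that the attack obeys the constraints \eqref{eq:rewardManiConst}--\eqref{eq:actionManiConst}, the martingale concentration, and the final bookkeeping --- are routine.
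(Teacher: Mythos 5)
Your proposal is correct and follows essentially the same route as the paper's proof: under the combined attack the learner effectively faces an MDP in which $\pi^+$ is optimal with a per-step suboptimality gap of at least $\mu_{\min}=\min_{h,s}\mu(s,\pi^+_h(s))$, and the regret hypothesis \eqref{eq:regA_13} then bounds the number of steps with $a_t(h)\neq\pi^+_h(s_t(h))$, which simultaneously counts the action manipulations, the reward manipulations, and (since each nonzero $|\epsilon_{t,h}|\leq 1$) the total contamination. Two points where you differ, both to your credit: you establish the optimality of $\pi^+$ and the gap $\mu_{\min}$ directly from the action-independence of the surrogate transitions (a termwise comparison), whereas the paper runs a backward induction on $\bar{Q}_h$; and you insert an Azuma--Hoeffding step to pass from the conditional expectation $\mathbb{E}[N_t\mid\mathrm{hist}]$ (which is what the value-function regret actually controls) to the realized count $\sum_t N_t$, a step the paper elides by writing $\sum_{t,h}\mu_{\min}\mathbf{1}(a_t(h)\neq\pi^+_h(s_t(h)))\leq R^{\mathcal{A}}(T,H)$ directly. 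Your version is therefore slightly more rigorous on the expectation-to-realization conversion while reaching the identical bounds.
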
 

{In \cite{chen2021improved}, if the attacker can observe learner's action (as in our setting), BARBAR-RL has a regret $\tilde{O}(\sqrt{T}+C^2)$ where $C$ is the total contamination in rewards and transition dynamics. Since \emph{action and reward manipulation} is a special case of \emph{transition dynamics and reward manipulation}, the bound of BARBAR-RL will also hold in our setting. If attacker is absent ($C=0)$, then the regret of the algorithm is $\tilde{O}(\sqrt{T})$, and \eqref{eq:regA_13} holds. If $C=o(\sqrt{T})$, then the regret of BARBAR-RL is $o(T)$, and claim $1$ in Theorem \ref{thm:boundedBlackBox} does not hold. Thus, there exists an order-optimal algorithm such that attacker's objective is not achieved in $o(\sqrt{T})$ attack cost. Hence, the attack cost of proposed strategy is order-optimal.}

The order-optimal algorithm, in \eqref{eq:regA_13}, has three parameters $\alpha$, $t_0$ and $\delta$. The parameter $\alpha$ is used to capture the dependence of regret on $H$; parameter $t_0$ captures the fact that the regret bound holds for sufficiently large $T$; parameter $\delta$ captures the fact that the regret bound holds with high probability. Some examples of order-optimal algorithm in episodic RL can be found in \cite{jin2018q} and references therein. We present results on attack cost for order-optimal algorithms, however this is not a limitation of our attack strategy. The analysis can be extended for any no-regret learning algorithm and a corresponding bound on attack cost can be obtained. 

Our result directly focus on the more realistic and also harder black-box attacks; for completeness, we also discuss how to design a white-box attack by manipulating   rewards and actions in the Appendix. 
This can also be used for designing a black-box attack in unbounded reward setting.
 

\section{Reward Poisoning in Unbounded Settings}\label{sec;unreward}
In this section, we investigate the unbounded reward setting. Formally,  for all $(s,a)\in \mathcal{S}\times\mathcal{A}$, we assume $r_{t,h}(s,a)$ follows a sub-Gaussian distribution  with mean $\mu(s,a)$ and standard deviation $\sigma$. While the rewards, drawn from sub-Gaussians, may be unbounded, we assume their mean $\mu(s,a) \in [-M, M]$ is bounded within some known interval for any $(s,a)$ pair. 
{Our main result in this section is the design of a black-box attack to \emph{any} order-optimal episodic RL algorithm with  $\tilde{O}(\sqrt{T})$ amount of reward contamination.  To our knowledge, this is the first efficient attack to episodic RL algorithms in unbounded reward settings. } 
\begin{algorithm}[b!]
\algsetup{linenosize=\small}
\small
\begin{algorithmic}[1]
\STATE \textbf{Initialization:} Input parameters are $\epsilon>0$ and policy $\pi^+$. For all $(s,a,h)\in \mathcal{S}\times \mathcal{A}\times [H]$, attacker initializes $\hat{\mu}(s,a)=0$, $\hat{\mu}^{UCB}(s,a)=M$, $\hat{\mu}^{LCB}(s,a)=-M$  and $N(s,a)=0$. 
\FOR{episode $t\leq T$}
\STATE Observe the initial state $s_t(1)$
\FOR{  $h\leq H$}
\STATE Observe the selected action $a_t(h)$, reward  $r(s_{t}(h),a_t(h))$ and next state $s_{t}(h+1)$. 
\STATE Update 
\vskip -12pt
\begin{equation}\label{eq:mu1}
\begin{split}
      &\hat{\mu}(s_t(h),a_t(h))\\
      &=\frac{\hat{\mu}(s_{t}(h),a_t(h))N(s_{t}(h),a_t(h))+ r(s_{t}(h),a_t(h))}{N(s_{t}(h),a_t(h))+1}, 
\end{split}
\end{equation}
\vskip -15pt
\begin{equation}\label{eq:mu1UCB}
\begin{split}
     &\hat{\mu}^{UCB}(s_{t}(h),a_t(h))\\&=\hat{\mu}(s_t(h),a_t(h))+\sigma\sqrt{\frac{4\log(2THSA)}{N(s_{t}(h),a_t(h))+1}},  
\end{split}
\end{equation}
\vskip -12pt
\begin{equation}\label{eq:mu1LCB}
\begin{split}
     &\hat{\mu}^{LCB}(s_{t}(h),a_t(h))\\
     &=\hat{\mu}(s_t(h),a_t(h))-\sigma\sqrt{\frac{4\log(2THSA)}{N(s_{t}(h),a_t(h))+1}}, 
\end{split}
\end{equation}
and $N(s_{t}(h),a_t(h))=N(s_{t}(h),a_t(h))+1$.
\IF{$a_t(h)=\pi_h^+(s_t(h))$}
\STATE Do not contaminate. 
\ELSE
\STATE Contaminate the reward observation such that
\vskip -12pt
\begin{equation}\label{eq:rewardCont1}
\begin{split}
    & r_t^o(s_t(h), a_t(h)) \\
    &=  \hat{\mu}^{LCB}(s_{t}(h),\pi_h^+(s_t(h)))-\epsilon\\
    &+(H-h) \min_{s,a\in \mathcal{S}\times \mathcal{A}}\hat{\mu}^{LCB}(s,a) \\
    & - (H-h)  \max_{s,a\in \mathcal{S}\times \mathcal{A}}\hat{\mu}^{UCB}(s,a)
      .
\end{split}
\end{equation}
\vskip -10pt
\ENDIF
\ENDFOR
\ENDFOR
\caption{Black box attack strategy}
\label{alg:attackRLUninnf}
\end{algorithmic}
\end{algorithm}

Formally, we consider the black-box attack setting in which the attacker doesn't know about the  expected reward and the transition dynamics of the underlying MDP. 
In this setting, we propose an attack which learns about the MDP, and has almost the same attack cost as the white-box attack, with an additional $O(\sqrt{\log T})$ factor.
Given the input parameters $\pi^+$ and $\epsilon>0$, our proposed attack strategy is presented in Algorithm \ref{alg:attackRLUninnf}. 
In Algorithm \ref{alg:attackRLUninnf}, the attacker utilizes its estimate of $\hat{\mu}(s,a)$, $\hat{\mu}^{UCB}(s,a)$ and $\hat{\mu}^{LCB}(s,a)$, defined in \eqref{eq:mu1}, \eqref{eq:mu1UCB} and \eqref{eq:mu1LCB} respectively,  to contaminate the reward observations. These estimates are initialized using the fact that $\mu(s,a)\in [-M,M]$, and are updated in each episode $t\leq T$ and at each step $h\leq H$. The parameters $\hat{\mu}^{UCB}(s,a)$ and $\hat{\mu}^{LCB}(s,a)$ are Upper Confidence Bound (UCB) and Lower Confidence Bound (LCB) of $\mu(s,a)$. Therefore, we  show that,  with high probability,
\vskip -3pt
\begin{equation}\label{eq:rewardObjetiveCon}
    \hat{\mu}^{LCB}(s,a)\leq \mu(s,a)\leq \hat{\mu}^{UCB}(s,a).
\end{equation}
In this attack, the reward observations are contaminated only if the action selected by the learner is not the same as the action desired by the target policy, namely $a_t(h)\neq \pi^+_h(s_t(h))$. In this scenario, the reward observation $r^o_t(s_t(h),a_t(h))$ is defined in \eqref{eq:rewardCont1}. In \eqref{eq:rewardCont1}, LCB estimates of the expected rewards are used to get the LCB estimate of the value function of \textit{target policy} and UCB estimates of the expected rewards are used to get UCB estimate of the value function over \textit{all policy}. The reward observation consists of a large negative bias sufficient to achieve the objective. This reward manipulation strategy in \eqref{eq:rewardCont1} ensures that the target policy $\pi^+$ is the optimal policy based on the observed reward observations, namely for all $h\leq H$, and $(s,a)\in \mathcal{S}\times\mathcal{A}$ such that $a\neq \pi_h^+(s)$, we have
\begin{equation}\label{eq:toprove}
    \tilde{Q}^{\pi}_{h}(s,a)\leq \tilde{Q}_{h}^{\pi+}(s,\pi^+_h(s))-\epsilon,
\end{equation}
where $\tilde{Q}^{\pi}_h(s,a)$ is the expected reward in state $s$ for action $a$ for the  reward observation under policy $\pi$. 
These values will not be the same as the ones defined in \eqref{eq:ValueFunc} and \eqref{eq:Qfunnc} since the reward observations are manipulated. We remark that the $r_t^o(s_t(h), a_t(h))$ can be computed through a backward induction procedure starting from horizon $H$. At any step $h$ in the episode, the definition of $r_t^o(s_t(h), a_t(h))$ depends linearly on the Q-values at $h$, which then depends linearly on $r_t^o(s_t(h), a_t(h))$. Therefore, $r_t^o(s_t(h), a_t(h))$ at any horizon $h$ can be computed by solving a linear system. 

We briefly discuss the key steps in this process of obtaining \eqref{eq:toprove}. 
 We show that with high probability 
 \vskip -10pt
\begin{equation}\label{eq:param1}
\begin{split}
    &\hat{\mu}^{LCB}(s_{t}(h),\pi_h^+(s_t(h)))+(H-h) \min_{s,a\in \mathcal{S}\times \mathcal{A}}\hat{\mu}^{LCB}(s,a)\\
    &\leq \tilde{Q}^{\pi^+}_h(s_t(h), \pi_h^+(s_t(h))).
\end{split}
\end{equation}
\vskip -10pt
Additionally, we have that with high probability 
 \vskip -10pt
\begin{equation}\label{eq:param2}
\begin{split}
     &(H-h)  \max_{s,a\in \mathcal{S}\times \mathcal{A}}\hat{\mu}^{UCB}(s,a) \\
     &\geq \mathbb{E}_{s^\prime\sim \mathbb{P}(s^\prime|s_{t}(h),a_t(h))}[\tilde V_{h+1}^{\pi^+}(s^\prime)].
\end{split}
\end{equation}
Combining \eqref{eq:param1} and \eqref{eq:param2}, we have that with high probability, the rewards contamination in Algorithm \ref{alg:attackRLUninnf} ensures \eqref{eq:rewardObjetiveCon}.

The following theorem shows that our proposed   black box attack has $\tilde{O}(\sqrt{T})$ amount of contamination.  
\begin{theorem}\label{thm:BBAtackun}
Consider any learning algorithm $\mathcal{A}$ such that its regret in the absence of attack is $ R^{\mathcal{A}}(T,H)=\tilde O(\sqrt{T}H^{\alpha}), \qquad \forall T\geq t_0$
\begin{equation}\label{eq:regA}
    R^{\mathcal{A}}(T,H)=\tilde O(\sqrt{T}H^{\alpha}), \qquad \forall T\geq t_0
\end{equation}
with probability at least $1-\delta$  where $\alpha\geq 1$ is a numerical constant.  
For any sub-optimal target policy $\pi^+$, $\epsilon>0$ and $T\geq t_0^2$,
 if an attacker follows strategy in Algorithm \ref{alg:attackRLUninnf}, then with probability at least $1-\delta-2/(HSAT)$ the following hold simultaneously: 
 \begin{enumerate}
     \item The attacker achieves its objective in  \eqref{eq:ObjectAttacker} and moreover $\sum_{t=1}^T\sum_{h=1}^H\mathbf{1}(a_t(h)= \pi_h^+(s_{t}(h)))=\Omega(T)$; 
     \item The number of reward manipulations is $\tilde O\big({\sqrt{T}H^\alpha}/{\epsilon}\big)$.
 \item The total amount of reward contamination is $\tilde O\big(\sqrt{T}H^{\alpha+1}(\epsilon+\sqrt{\log (HTSA)} )/\epsilon\big)$.
 \end{enumerate}
\end{theorem}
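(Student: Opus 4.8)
The plan is the standard three-layer argument for black-box poisoning. \textbf{Layer 1 (concentration).} Let $\mathcal E$ be the event that, for every $(s,a,h)$ and every episode $t$, the running estimates of Algorithm~\ref{alg:attackRLUninnf} obey \eqref{eq:rewardObjetiveCon}, i.e.\ $\hat\mu^{LCB}(s,a)\le\mu(s,a)\le\hat\mu^{UCB}(s,a)$, and in addition every realized reward satisfies $|r_{t,h}(s,a)-\mu(s,a)|\le 2\sigma\sqrt{\log(2THSA)}$. Since the rewards are $\sigma$-sub-Gaussian and the widths in \eqref{eq:mu1UCB}--\eqref{eq:mu1LCB} are the Hoeffding widths at confidence level $\log(2THSA)$, a sub-Gaussian tail bound together with a union bound over the $\le SA$ pairs and the $\le T$ distinct values of the counter $N(s,a)$ (a peeling / ``uniform over $N$'' argument) gives $\mathbb P[\mathcal E]\ge 1-2/(HSAT)$. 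Everything below is argued on $\mathcal E$; intersecting with the learner's regret event \eqref{eq:regA} only at the very end accounts for the additive $\delta$ in the failure probability.

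\textbf{Layer 2 (optimality of $\pi^+$ under the contamination).} I would turn the sketch around \eqref{eq:param1}--\eqref{eq:param2} into a proof of \eqref{eq:toprove}: for every $h$, every $(s,a)$ with $a\ne\pi^+_h(s)$, and every policy $\pi$, $\tilde Q^{\pi}_h(s,a)\le\tilde Q^{\pi^+}_h(s,\pi^+_h(s))-\epsilon$. Two observations do it. First, along $\pi^+$ no reward is ever contaminated, so $\tilde V^{\pi^+},\tilde Q^{\pi^+}$ are the value functions \eqref{eq:ValueFunc}--\eqref{eq:Qfunnc} of the true means; hence $\tilde Q^{\pi^+}_h(s,\pi^+_h(s))\ge\mu(s,\pi^+_h(s))+(H-h)\min_{s',a'}\mu(s',a')\ge\hat\mu^{LCB}(s,\pi^+_h(s))+(H-h)\min_{s',a'}\hat\mu^{LCB}(s',a')$ on $\mathcal E$, which is \eqref{eq:param1}. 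Second, under any policy $\pi$ every reward collected from a successor state has conditional mean at most $\max_{s',a'}\hat\mu^{UCB}(s',a')$ --- true-mean rewards by \eqref{eq:rewardObjetiveCon}, and contaminated rewards because the right-hand side of \eqref{eq:rewardCont1} is at most $\hat\mu^{UCB}(s,\pi^+_h(s))-\epsilon\le\max\hat\mu^{UCB}$ (using $\min\hat\mu^{LCB}\le\max\hat\mu^{UCB}$) --- so $\mathbb E_{s'}[\tilde V^{\pi}_{h+1}(s')]\le(H-h)\max_{s',a'}\hat\mu^{UCB}(s',a')$, which is \eqref{eq:param2}. Writing $\tilde Q^{\pi}_h(s,a)=r^o_t(s_t(h),a_t(h))+\mathbb E_{s'}[\tilde V^{\pi}_{h+1}(s')]$ and substituting \eqref{eq:rewardCont1}, the two $(H-h)$-scaled terms cancel and \eqref{eq:param1}--\eqref{eq:param2} leave exactly $\tilde Q^{\pi}_h(s,a)\le\hat\mu^{LCB}(s,\pi^+_h(s))+(H-h)\min\hat\mu^{LCB}-\epsilon\le\tilde Q^{\pi^+}_h(s,\pi^+_h(s))-\epsilon$. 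This is \eqref{eq:ObjectAttacker}: in the environment the learner actually faces, $\pi^+$ is optimal and every deviation from $\pi^+$ at any step carries a $Q$-gap of at least $\epsilon$.

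\textbf{Layer 3 (regret to attack cost).} An order-optimal learner has regret $\tilde O(\sqrt T H^\alpha)$ on every MDP, so on the contaminated environment (in which $\pi^+$ is optimal by Layer~2) its cumulative value-regret against $\pi^+$ is $\tilde O(\sqrt T H^\alpha)$. By the performance-difference lemma, $\sum_{t}\big(\tilde V^{\pi^+}_1(s_t(1))-\tilde V^{\pi_t}_1(s_t(1))\big)\ge\epsilon\,\mathbb E\big[\sum_{t,h}\mathbf 1(a_t(h)\ne\pi^+_h(s_t(h)))\big]$, and an Azuma/Freedman bound (each episode adds at most $H$ to the count) converts the expectation into a high-probability count of the same order. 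Because Algorithm~\ref{alg:attackRLUninnf} contaminates a step precisely when $a_t(h)\ne\pi^+_h(s_t(h))$, this is claim~2: the number of reward manipulations is $\tilde O(\sqrt T H^\alpha/\epsilon)$; and claim~1's counting part follows, $\sum_{t,h}\mathbf 1(a_t(h)=\pi^+_h(s_t(h)))\ge TH-\tilde O(\sqrt T H^\alpha/\epsilon)=\Omega(T)$, once $T\ge t_0^2$ makes $\sqrt T$ dominate. For claim~3, on $\mathcal E$ all estimates and realized rewards lie in $[-B,B]$ with $B=O(M+\sigma\sqrt{\log(HTSA)})$, so \eqref{eq:rewardCont1} gives $|\epsilon_{t,h}(s_t(h),a_t(h))|\le (2H-1)B+\epsilon+B=O(H\sqrt{\log(HTSA)}+\epsilon)$ at each manipulated step (absorbing $M,\sigma$ and using $\sqrt{\log(HTSA)}\ge1$); multiplying by the bound from claim~2 yields $\sum_{t,h}|\epsilon_{t,h}(s_t(h),a_t(h))|=\tilde O\big(\sqrt T H^{\alpha+1}(\epsilon+\sqrt{\log(HTSA)})/\epsilon\big)$. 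Intersecting $\mathcal E$ with \eqref{eq:regA} gives the stated probability $1-\delta-2/(HSAT)$.

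\textbf{Main obstacle.} The delicate point is the first sentence of Layer~3: the environment the learner faces is only \emph{nearly} a fixed MDP, because the biases added in \eqref{eq:rewardCont1} use the evolving estimates $\hat\mu^{LCB},\hat\mu^{UCB}$, so one must justify that the regret guarantee \eqref{eq:regA} still applies. I would close this by comparing throughout against the limiting stationary contaminated MDP in which the estimates are replaced by the true means $\mu$ (in which $\pi^+$ is still $\epsilon$-optimal, by the same computation as Layer~2), and bounding the total drift between the two: the per-pair bias error is $O(H)$ times the confidence width $\sigma\sqrt{\log(\cdot)/N_t}$, whose running sum over the $N_T(s,a)$ visits to a pair is $\tilde O(H\sqrt{N_T(s,a)})$, and a Cauchy--Schwarz over $(s,a)$ makes the total drift $\tilde O(\sqrt T)$ in $T$ --- the same order as the regret itself --- so the learner's regret against $\pi^+$ in the actual contaminated sequence is still $\tilde O(\sqrt T H^\alpha)$ up to lower-order and logarithmic terms already hidden by $\tilde O(\cdot)$.
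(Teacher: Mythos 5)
Your proposal follows essentially the same route as the paper's proof: the same concentration event on the UCB/LCB estimates, the same derivation of the $\epsilon$-gap via \eqref{eq:param1}--\eqref{eq:param2} (reducing the optimality of $\pi^+$ to the white-box induction), and the same conversion of the regret bound into the count of manipulations and then, via a per-step magnitude bound, into the total contamination. The one place you go beyond the paper is your ``main obstacle'' paragraph: the paper applies the regret guarantee \eqref{eq:regA} to the contaminated, estimate-dependent (hence non-stationary) environment without comment, whereas you flag this and sketch a drift argument to justify it --- a genuine subtlety that the paper's own proof leaves implicit.
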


Additionally, the proposed black-box attack has an additional cost $O(\sqrt{\log T})$ in comparison to the white-box attack in unbounded reward setting.


\begin{remark} \label{rem;1} In unbounded setting, reward poisoning attack  has been studied in black-box setting recently in RL for infinite horizon by \cite{rakhsha2021reward}. They consider attacking $L$ online learners whereas we only has one learner. The  attack objective of \cite{rakhsha2021reward}  is to force all these  learning algorithms to execute the target policy $\pi^+$.  
We now highlight the key differences between our attack strategy  and the strategy of \cite{rakhsha2021reward}. The attack cost in \cite{rakhsha2021reward} is $\tilde{O}(T\log L)$, which is linear in $T$. On contrary, the attack cost of our proposed attack is $\tilde{O}(\sqrt{T})$. This difference occurs because the attack in \cite{rakhsha2021reward} estimates both the expected reward and the transition dynamics of the MDP. This is done by an explore-then-exploit form of strategy which leads to $\tilde{O}(T\log L )$ attack cost. 
On contrary, our strategy focuses on estimating expected rewards only (and not transition dynamics). It
 compensates for this lack of knowledge of transition dynamics by adding a negative bias $O(\sqrt{\log T})$ to the reward observation. However, this additional cost is minimal in comparison to the cost of learning transition dynamics in \cite{rakhsha2021reward}. {This 
key technical insight allows us to reduce the attack cost of $O(T)$ in \cite{rakhsha2021reward} to $\tilde{O}(\sqrt
 T)$ in current work. }
 
\end{remark}

\section{Additional Related Work}
Reward manipulation attack has been studied extensively in  MAB \cite{jun2018adversarial,liu2019data,rangi2022saving}, where the attacker's objective is to mislead the learner to choose a suboptimal action. 
Action manipulation attack has also been studied in MAB \cite{liu2020action}, and the number of action manipulations required by the attacker is $O(\log T)$. All these attacks are studied in a Black-box setting. 
In this work, we show that unlike MAB setting, reward manipulation (only) and action manipulation (only) are not sufficient to successfully attack Episodic RL setting with bounded rewards.
 \cite{bogunovic2020stochastic} considered the ``possibility'' of reward poisoning attack  in Linear Bandits, as a step towards showing lower bound for designing robust algorithms. While their results can be extended to prove a similar regret lower bound  for RL, this only means that there \emph{exists} an RL instance   such that the attacker can successfully attack any no-regret algorithm for this particular instance.  
However, this  existence result   is significantly different from our perspective of designing attacks, in which we design      strategies that can successfully attack an \emph{arbitrary} episodic RL instance. 

In online RL setting, studies related to poisoning attacks have only   started recently, and have primarily focused  on  white-box settings, where the attacker has complete knowledge of   the underlying MDP models, with unbounded rewards \cite{rakhsha2020policy}.
In such white-box attacks, \cite{rakhsha2020policy} show that   reward poisoning attack requires $\tilde\Theta(\sqrt{T})$ amount of contamination to attack any order-optimal learning algorithm; they also show that    dynamic manipulation attack can achieve the same success with similar amount of cost in unbounded reward setting under some sufficient conditions.  \cite{zhang2020adaptive}  study  the feasibility of the reward poisoning attack in white box setting for $Q$-learning, and the attacker is constrained by the amount of contamination.  In a slightly different thread,  \cite{huang2020manipulating} analyse the  degradation of the performance of Temporal difference learning and Q-learning under falsified rewards.  

To our knowledge, \cite{rakhsha2021reward} is the only work that studies   poisoning attack   in black-box setting for policy teaching in \emph{infinite-horizon} RL. However, they focused on the  settings with $L$ online learners, and the objective of their attacker is to force all these  learners to execute a target policy $\pi^+$. They proposed an attack with   $\tilde{O}(T\log L+L\sqrt{T})$ amount of contamination when $L$ is large enough.  However, our work focuses on attacking a \emph{single} learner and thus our setting is not comparable to \cite{rakhsha2021reward}. However, we can indeed apply our attack repeatedly to different learners to obtain an effective attack strategy for the setup of \cite{rakhsha2021reward}, which leads to an attack cost of $\tilde{O}(L\sqrt{T})$ (note however, the attack of \cite{rakhsha2021reward} cannot work for small $L$, e.g., $L=1$ as in our setup) in \emph{episodic} RL. This improves their attack cost by an additive amount $O(T\log L)$. Our more efficient attack is due to a more efficient design for the adversary to explore and learn the MDP, which is discussed at length in  Remark \ref{rem;1} in Section \ref{sec;unreward} . 

 \emph{Test-time} adversarial attacks against RL has also been studied. Here, however, the policy $\pi$ of the RL agent is pre-trained and fixed, and the objective of the attacker is to manipulate the perceived state of the RL agent in order to induce undesired action \cite{huang2017adversarial,lin2017tactics,kos2017delving,behzadan2017vulnerability}. 
Such test-time attacks do not modify the the policy $\pi$, whereas  training-time attacks we study in this paper aims at poisoning the learned policy directly and thus may have a longer-term bad effects.  There have also been studies on reward poisoning against  \emph{Batch RL}  \cite{ma2019policy,zhang2009policy} where the attacker can modify the pre-collected batch data set at once. The focus of the present work is on \emph{online} attack where the poisoning is done on the fly.

\section{Conclusion}
This paper tries to understand poisoning attacks in RL. Towards that end,  we propose a reward manipulation attack for unbounded reward setting which successfully fool any order-optimal RL algorithm to pull a target policy with $\tilde{O}(\sqrt{T})$ attack cost. Extending the study to bounded reward setting, we show that the adversary cannot achieve its objective using either reward manipulation or action manipulation attack even in white-box setting, where the information about the MDP is assumed to be known. Hence, to contaminate a order-optimal RL algorithm, the adversary  needs to combine the power of reward manipulation and action manipulation. Indeed,  we show that  an attack that uses both reward manipulation and action manipulation can achieve adversary's objective with $\tilde{\Theta}(\sqrt{T})$ attack cost, which is order-optimal.
We also studied the in-feasibility of the attack under the constraint that the adversary can attack only if $a_t(h)\neq \pi^+_h(s)$. 

\bibliographystyle{named}
\bibliography{ijcai22}


\appendix 
\newpage
\onecolumn
\textbf{
\large{Supplementary Material: Understanding the Limits of   Poisoning Attacks in  Episodic Reinforcement Learning}}

\section{Proof of Theorem \ref{thm:infeasibleRewardMani}}\label{append:bounded-insufficiency}
\subsection{Insufficiency of (only) reward manipulation}
 
In this example, $\mathcal{S}=\{s_1,s_2\}$ and $\mathcal{A}=\{a_1,a_2\}$. The transition dynamics is 
\begin{equation}
    \mathbb{P}(s_1|s_1,a_1)=1, \mathbb{P}(s_2|s_1,a_1)=0, \mathbb{P}(s_1|s_1,a_2)=0, \mathbb{P}(s_2|s_1,a_2)=1,
\end{equation}
\begin{equation}
    \mathbb{P}(s_1|s_2,a_1)=0, \mathbb{P}(s_2|s_2,a_1)=1,  \mathbb{P}(s_1|s_2,a_2)=1, \mathbb{P}(s_2|s_2,a_2)=0.
\end{equation}
Also, we have
\begin{equation}\label{eq:meanExample}
    \mu(s_1,a_1)=\epsilon_1=0.25 , \mu(s_1,a_2)=1 , \mu(s_2,a_1)=\epsilon_2=.6, \mu(s_2,a_2)=1
\end{equation}
Let $H=2$. The target policy $\pi^+$ for the attacker is 
\begin{equation}
  \forall h\leq H:\quad  \pi^+_h(s_1)=a_1 \mbox{ and } \pi^+_h(s_2)=a_1.
\end{equation}
Similar to \cite{rakhsha2020policy}, the attacker is subject to following constraints
\begin{equation}\label{eq:AttackConstraint}
    r^o_t(s_t(h),a_t(h))= r_t(s_t(h),a_t(h)) \mbox{ if } a_t(h)= \pi^+_h(s_t(h)), \mbox{ and } r^o_t(s_t(h),a_t(h))\in [0,1],
\end{equation}
or equivalently, 
\begin{equation}
\begin{split}
     &\epsilon_t(s_t(h),a_t(h))= 0 \mbox{ if } a_t(h)= \pi^+_h(s_t(h)), \\
    \mbox{ and }& \epsilon_t(s_t(h),a_t(h))\in [-r_t(s_t(h),a_t(h)),1-r_t(s_t(h),a_t(h))].
\end{split}
\end{equation}
The objective of the attacker is that for all $\pi\neq \pi^+$, $h\leq H$ and $s\in \mathcal{S}$,
\begin{equation} 
    V_h^{\pi}(s)< V_h^{\pi^+}(s).
\end{equation}
Let policy $\tilde{\pi}$ be
\begin{equation}
\begin{split}
     h=1:&\qquad \tilde{\pi}_h(s_1)=a_2 \mbox{ and } \tilde{\pi}_h(s_2)=a_2,\\
     h=2:&\qquad \tilde{\pi}_h(s_1)=a_1 \mbox{ and } \tilde{\pi}_h(s_2)=a_1. 
\end{split}
\end{equation}
At $h=H-1=1$, for all reward manipulation attack satisfying \eqref{eq:AttackConstraint}, we will show that
\begin{equation}
    V_h^{\tilde{\pi}}(s_1)> V_h^{{\pi}^+}(s_1).
\end{equation}
At $h=H-1=1$, we have
\begin{equation}\label{eq:policy1}
    V_h^{{\pi}^+}(s_1)=\mu(s_1,a_1)+ \mu(s_1,a_1)= 2\epsilon_1.
\end{equation}
Additionally, we have 
\begin{equation}\label{eq:policy2}
\begin{split}
    V_h^{\tilde{\pi}}(s_1)&\stackrel{(a)}{=}r^o_t(s_1,a_2)+\epsilon_2,\\
    &\stackrel{(b)}{\geq} \epsilon_2,
\end{split}
\end{equation}
where $(a)$ follows from the facts that $\tilde{\pi}_{H-1}(s_1)=a_2\neq \pi^+_H(s_1)$, which implies that the attacker can manipulate this observation, the next state at step $H$ is $s_2$, and  $\tilde{\pi}_{H}(s_2)=a_1= \pi^+_H(s_2)$, which implies that the attacker can not manipulate this observation, and $(b)$ follows from the fact that $r^o_t(s_1,a_2)\in [0,1]$ using \eqref{eq:AttackConstraint}. 

Now, since $\epsilon_1=0.25$ and $\epsilon_2=0.6$, comparing \eqref{eq:policy1} and \eqref{eq:policy2}, we have that 
\begin{equation}
    V_{H-1}^{\tilde \pi}(s_1)> V_{H-1}^{\pi^+}(s_1).
\end{equation}
This implies that for any reward manipulation attack in bounded setting satisfying \eqref{eq:AttackConstraint}, there exists a policy $\pi \neq \pi^+$, a step $h\leq H$ and a state $s_1\in \mathcal{S}$ such that
\begin{equation}
       V_{h}^{\pi}(s)> V_h^{\pi^+}(s).
\end{equation}
 
\subsection{Insufficiency of (only)  Action Manipulation}
 
 

The MDP construction and target policy $\pi^+$ are the same as the one in Theorem \ref{thm:infeasibleRewardMani}. Let $H=2$. We also consider policy $\tilde{\pi}$ as follows
\begin{equation}
\begin{split}
     h=1:&\qquad \tilde{\pi}_h(s_1)=a_2 \mbox{ and } \tilde{\pi}_h(s_2)=a_2,\\
     h=2:&\qquad \tilde{\pi}_h(s_1)=a_1 \mbox{ and } \tilde{\pi}_h(s_2)=a_1. 
\end{split}
\end{equation}
At $h=H-1=1$, for all reward manipulation attack satisfying \eqref{eq:AttackConstraint}, we will show that
\begin{equation}
    V_h^{\tilde{\pi}}(s_1)> V_h^{{\pi}^+}(s_1).
\end{equation}
At $h=H-1=1$, we have
\begin{equation}\label{eq:policy1_1}
    V_h^{{\pi}^+}(s_1)=\mu(s_1,a_1)+ \mu(s_1,a_1)= 2\epsilon_1.
\end{equation}
Additionally, we have 
\begin{equation}\label{eq:policy2_1}
\begin{split}
    V_h^{\tilde{\pi}}(s_1)&\stackrel{(a)}{\geq}\min\{\mu(s_1,a_1)+\mu(s_1,a_1), \mu(s_1,a_2)+\mu(s_2,a_1)\}\\
    &\stackrel{(b)}{\geq} \min\{2\epsilon_1,1+\epsilon_2\}
\end{split}
\end{equation}
where $(a)$ follows from the facts that $\tilde{\pi}_{H-1}(s_1)=a_2\neq \pi^+_H(s_1)$, which implies that the attacker can manipulate the action, namely $a_t(h)=a_1$ or $a_t(h)=a_2$, and if $a_t(h)=a_1$ (or $a_t(h)=a_2$), then $V_h^{\tilde{\pi}}(s_1)$ is $2\mu(s_1,a_1)$ (or $\mu(s_1,a_2)+\mu(s_2,a_1)$), 
 and $(b)$ follows from \eqref{eq:meanExample}. 

Now, since $\epsilon_1=0.25$ and $\epsilon_2=0.6$, comparing \eqref{eq:policy1_1} and \eqref{eq:policy2_1}, we have that 
\begin{equation}
    V_{H-1}^{\tilde \pi}(s_1)\geq V_{H-1}^{\pi^+}(s_1).
\end{equation}
This implies that for any action manipulation attack in bounded setting satisfying \eqref{eq:AttackConstraint}, there exists a policy $\pi \neq \pi^+$, a step $h\leq H$ and a state $s_1\in \mathcal{S}$ such that
\begin{equation}
       V_{h}^{\pi}(s)\geq V_h^{\pi^+}(s).
\end{equation}

\section{White-box Attack in Bounded Reward Setting}\label{append:white-bounded}
The attack strategy is 
\begin{equation}\label{eq:trasitionCorruption_in}
    a^o_t(h)=\begin{cases} a_t(h)&\mbox{ if } a_t(h)= \pi_h^+(s),\\
      \pi_h^+(s) &\mbox{ if }  a_t(h)\neq \pi_h^+(s),
    \end{cases}
\end{equation}
and
{\scriptsize
\begin{equation}\label{eq:RL_fullInfo2_in}
    r_t^o(s_t(h), a_t(h))=\begin{cases} r_t(s_t(h),a_t(h))&\mbox{ if } a_t(h)= \pi_h^+(s_t(h)),\\
      \tilde{Q}_h^{\pi^+}(s_t(h),\pi_h^+(s_t(h)))-E_{s^\prime\sim \mathbb{P}(s^\prime|s_t(h),\pi_h^+(s_t(h)))}[\bar{V}^{\pi^+}_{h+1}(s^\prime)]-\epsilon& otherwise.
    \end{cases}
\end{equation}}
where $\bar{Q}^\pi_h(s,a)$ is the expected reward in state $s$ for action $a$ introduced by the above reward and action manipulation under policy $\pi$, and $\bar{V}_h^\pi(s)$ is the expected reward in state $s$ for the above reward and action manipulation under policy $\pi$.

\begin{theorem}\label{thm:boundedWhiteBox}
For any learning algorithm $\mathcal{A}$ such that for all $T\geq t_0$, the regret in the absence of attack is 
\begin{equation}\label{eq:regA_12}
    R^{\mathcal{A}}(T,H)=\tilde O(\sqrt{T}H^{\alpha}),
\end{equation}
with probability at least $1-\delta$, where $\alpha\geq 1$ is a numerical constant; and for any sub-optimal target policy $\pi^+$ and $0<\epsilon\leq \min_{h\leq H, s\in\mathcal{S}}\mu(s,\pi^+_h(s))$, if an attacker follows strategy in \eqref{eq:trasitionCorruption_in} and \eqref{eq:RL_fullInfo2_in}, then with probability at least $1-\delta$, the number of reward manipulation attacks will be
 \begin{equation}\label{eq:NumFinalRes_1.2}
     \sum_{t=1}^T\sum_{h=1}^H\mathbf{1}(\epsilon_{t,h}(s_t(h),a_t(h))\neq 0)=\tilde O\big({\sqrt{T}H^\alpha}/{\epsilon}\big),
 \end{equation}
 the amount of contamination is 
\begin{equation}\label{eq:conntaminBB_1.2}
    \sum_{t=1}^T\sum_{h=1}^H|\epsilon_{t,h}(s_t(h),a_t(h)|= \tilde O\big(\sqrt{T}H^{\alpha}/\epsilon\big),
\end{equation}
the number of action manipulation attacks is 
\begin{equation}\label{eq:numActionMannipulation}
    \sum_{t=1}^{T}\sum_{h=1}^H\mathbf{1}(a^o_t(h)\neq a_t(h))=\tilde O\big({\sqrt{T}H^\alpha}/{\epsilon}\big),
\end{equation}
and $\sum_{t=1}^T\sum_{h=1}^H\mathbf{1}(a_t(h)= \pi_h^+(s_{t}(h)))=\Omega(T)$. 
\end{theorem}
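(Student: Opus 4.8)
The plan is to show that, under the combined manipulation \eqref{eq:trasitionCorruption_in}--\eqref{eq:RL_fullInfo2_in}, the learner is effectively running on a \emph{fixed} bounded-reward MDP $\bar M$ in which the target policy $\pi^+$ is uniquely optimal and every one-step deviation from $\pi^+$ costs exactly $\epsilon$ in value; the regret guarantee of an order-optimal learner on $\bar M$ then converts directly into the claimed attack-cost bounds. \textbf{Step 1 (identify the induced MDP).} Playing $\pi^+$ triggers neither an action nor a reward manipulation (the ``if'' branches of \eqref{eq:trasitionCorruption_in}--\eqref{eq:RL_fullInfo2_in}), so following $\pi^+$ under the attack is identical to following $\pi^+$ in the true MDP; hence $\bar V_h^{\pi^+}=V_h^{\pi^+}$ and the target-action term in \eqref{eq:RL_fullInfo2_in} equals $\mu(s,\pi_h^+(s))+\mathbb{E}_{s'\sim\mathbb{P}(\cdot|s,\pi_h^+(s))}[V_{h+1}^{\pi^+}(s')]$. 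Plugging this in, the reward the learner observes after any \emph{non-target} action $a\ne\pi_h^+(s)$ at $(s,h)$ collapses to the constant $\mu(s,\pi_h^+(s))-\epsilon$, and by \eqref{eq:trasitionCorruption_in} the transition after \emph{any} action at $(s,h)$ is $\mathbb{P}(\cdot|s,\pi_h^+(s))$. So the learner interacts with a fixed MDP $\bar M$: transitions at $(s,h)$ are $\mathbb{P}(\cdot|s,\pi_h^+(s))$ irrespective of the action, the target action returns the true reward of $(s,\pi_h^+(s))$, and every non-target action returns the deterministic reward $\mu(s,\pi_h^+(s))-\epsilon$, which lies in $[0,1]$ precisely because $0<\epsilon\le\min_{h,s}\mu(s,\pi_h^+(s))\le 1$; thus $\bar M$ is a legitimate bounded-reward MDP.

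\textbf{Step 2 ($\pi^+$ is optimal in $\bar M$ with a uniform gap).} A backward induction on $h$ shows $\bar V_h^\pi(s)\le V_h^{\pi^+}(s)$ for every policy $\pi$, and moreover $\bar V_h^\pi(s)\le V_h^{\pi^+}(s)-\epsilon$ whenever $\pi_h(s)\ne\pi_h^+(s)$: if $\pi_h(s)=\pi_h^+(s)$ the bound follows from monotonicity of the Bellman backup and the inductive hypothesis; if $\pi_h(s)\ne\pi_h^+(s)$ the one-step reward is $\mu(s,\pi_h^+(s))-\epsilon$ while the continuation uses the same kernel $\mathbb{P}(\cdot|s,\pi_h^+(s))$, losing exactly $\epsilon$. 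This establishes the attacker's objective \eqref{eq:ObjectAttacker} (which, being a statement about the deterministic value functions of $\bar M$, holds always), shows $\pi^+$ is optimal in $\bar M$, and yields the advantage identity $\bar A_h^{\pi^+}(s,a):=\bar Q_h^{\pi^+}(s,a)-\bar V_h^{\pi^+}(s)=-\epsilon\,\mathbf{1}\{a\ne\pi_h^+(s)\}$.

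\textbf{Step 3 (convert regret on $\bar M$ into attack cost).} Because $\bar M$ is a fixed bounded-reward MDP, the order-optimal learner $\mathcal A$ satisfies $R^{\mathcal A}_{\bar M}(T,H)=\tilde O(\sqrt T H^\alpha)$ on $\bar M$ with probability $\ge 1-\delta$. At a step $(t,h)$ a reward manipulation and an action manipulation occur together, exactly when $a_t(h)\ne\pi_h^+(s_t(h))$; let $X_t$ count such steps in episode $t$. The performance-difference lemma in $\bar M$ with the advantage identity of Step 2 gives $\bar V_1^{\pi^+}(s_t(1))-\bar V_1^{\pi_t}(s_t(1))=\epsilon\,\mathbb{E}[X_t\mid\mathcal F_{t-1}]$, where $\pi_t$ is $\mathcal A$'s policy in episode $t$; summing over $t$ yields $\sum_t\mathbb{E}[X_t\mid\mathcal F_{t-1}]=R^{\mathcal A}_{\bar M}(T,H)/\epsilon=\tilde O(\sqrt T H^\alpha/\epsilon)$. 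A standard Azuma/Freedman bound (per-episode increments bounded by $H$, so the deviation is $\tilde O(H\sqrt T)$, absorbed into the $\tilde O$) then gives $\sum_t X_t=\tilde O(\sqrt T H^\alpha/\epsilon)$ with high probability, which is \eqref{eq:NumFinalRes_1.2} and \eqref{eq:numActionMannipulation}. For \eqref{eq:conntaminBB_1.2}, each nonzero $\epsilon_{t,h}$ is the observed reward $\mu(s_t(h),\pi_h^+(s_t(h)))-\epsilon$ minus the realized true reward of $(s_t(h),\pi_h^+(s_t(h)))$, a difference of two numbers in $[0,1]$, hence $|\epsilon_{t,h}|\le 1$, so the total contamination is at most $\sum_t X_t=\tilde O(\sqrt T H^\alpha/\epsilon)$. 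Finally, since $\sum_t X_t=o(T)$, we get $\sum_{t,h}\mathbf{1}\{a_t(h)=\pi_h^+(s_t(h))\}=HT-o(T)=\Omega(T)$ for $T$ large enough.

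\textbf{Expected main obstacle.} The delicate point is the interface between Steps 1 and 3: one must verify that the learner really sees a \emph{fixed, legitimate} bounded-reward MDP — which is exactly the algebraic collapse of \eqref{eq:RL_fullInfo2_in} to the constant $\mu(s,\pi_h^+(s))-\epsilon$ together with the hypothesis $\epsilon\le\min_{h,s}\mu(s,\pi_h^+(s))$ keeping it in $[0,1]$ — so that $\mathcal A$'s order-optimality guarantee can be invoked on $\bar M$; and one must pass carefully from the \emph{expected} manipulation count (which the regret directly controls) to the \emph{realized} counts in the statement, which is where the martingale concentration and a small accounting of failure probabilities enter.
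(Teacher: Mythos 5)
Your proposal is correct and follows essentially the same route as the paper's proof: show that under the combined manipulation the target policy is optimal in the induced MDP with a uniform per-step gap of $\epsilon$ (the paper's backward induction establishing $\bar{Q}^\pi_h(s,a)\le \bar{Q}^{\pi^+}_h(s,\pi^+_h(s))-\epsilon$ and the feasibility check $r^o_t=\mu(s,\pi^+_h(s))-\epsilon\in[0,1]$), then charge each deviation $a_t(h)\neq\pi^+_h(s_t(h))$ at least $\epsilon$ against the $\tilde O(\sqrt{T}H^\alpha)$ regret and at most $1$ unit of contamination. The only difference is that you are more careful at two points the paper elides — explicitly arguing that the learner faces a fixed legitimate bounded-reward MDP so its regret guarantee applies, and passing from the expected deviation count (via the performance-difference lemma) to the realized count via Azuma — which strengthens rather than changes the argument.
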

\begin{proof}
First, we will show that the optimal policy under action and reward manipulation attack in \eqref{eq:trasitionCorruption_in} and \eqref{eq:RL_fullInfo2_in} is $\pi^+$, namely  for all $\pi\neq\pi^+$, $h\leq H$ and $s\in \mathcal{S}$, we have
\begin{equation}\label{eq:valueFunct_1_1}
    \bar{V}_h^{\pi^+}(s)> \bar{V}_h^\pi (s).
\end{equation}
We will show this by induction. We will that that \eqref{eq:valueFunct_1_1} holds for $h=H$. Then, we will show that \eqref{eq:valueFunct_1_1} holds for $h<H$  if it holds for $h+1$. 
At $h=H$, for all $\pi$, using \eqref{eq:trasitionCorruption_in} and \eqref{eq:RL_fullInfo2_in}, we have that  
\begin{equation}
    \bar{Q}_{H}^\pi(s,a)=\begin{cases} \mu(s,a)&\mbox{ if } a=\pi_{H}^+(s),\\
      \mu (s,\pi_{H}^+(s))-\epsilon&\mbox{ if } otherwise,
    \end{cases}
\end{equation}
since episode terminates at step $H$. This implies that for $h=H$, we have that \eqref{eq:valueFunct_1_1} holds, and for all $(s,a)\in \mathcal{S}\times \mathcal{A}$ such that $a\neq \pi_H^+(s)$, we have
\begin{equation}
    \bar{Q}_{H}^\pi(s,a)= \bar{Q}_{H}^{\pi^+}(s,\pi_H^+(s))-\epsilon.
\end{equation}
Now, consider any $h<H$. Let \eqref{eq:valueFunct_1_1} holds for $h+1$. Using \eqref{eq:trasitionCorruption_in} and \eqref{eq:RL_fullInfo2_in}, for all $\pi$, we have that
{
\begin{equation}\label{eq:QValue_1}
    \bar{Q}^\pi_{h}(s,a)=\begin{cases} \mu (s,a)+\mathbb{E}_{s^\prime\sim P(s^\prime|s,a)}[\bar{V}^\pi_{h+1}(s^\prime)]&\mbox{ if } a= \pi_{h}^+(s),\\
        \bar{Q}^{\pi^+}_{h}(s, \pi_{h}^+(s))-\mathbb{E}_{s^\prime\sim P(s^\prime|s,\pi_{h}^+(s))}[\bar V_{h+1}^{\pi^+}(s^\prime)]+ \mathbb{E}_{s^\prime\sim P(s^\prime|s,\pi_{h}^+(s))}[\bar{V}^\pi_{h+1}(s^\prime)]-\epsilon&\mbox{ if } otherwise.
    \end{cases}
\end{equation}
}
Since \eqref{eq:valueFunct_1_1} holds for $h+1$, we have that for $a=\pi_h^+(s)$,
\begin{equation}
    \bar{Q}^\pi_{h}(s,a)< \mu (s,a)+\mathbb{E}_{s^\prime\sim P(s^\prime|s,a)}[\bar{V}^{\pi^+}_{h+1}(s^\prime)]=\bar{Q}^{\pi^+}_{h}(s,a).
\end{equation}
Additionally, for $a\neq\pi^+_{h}(s)$, we have
\begin{equation}\label{eq:qfunction1_1}
\begin{split}
      \bar{Q}^\pi_{h}(s,a)&= \bar{Q}^{\pi^+}_{h}(s, \pi_{h}^+(s))-\mathbb{E}_{s^\prime\sim P(s^\prime|s,\pi_{h}^+(s))}[\bar V_{h+1}^{\pi^+}(s^\prime)]+ \mathbb{E}_{s^\prime\sim P(s^\prime|s,\pi_{h}^+(s))}[\bar{V}^\pi_{h+1}(s^\prime)]-\epsilon,\\
      &=\bar{Q}^{\pi^+}_{h}(s, \pi_{h}^+(s))+ \mathbb{E}_{s^\prime\sim P(s^\prime|s,\pi_{h}^+(s))}[\bar{V}^\pi_{h+1}(s^\prime)- \bar{V}_{h+1}^{\pi^+}(s^\prime)]-\epsilon,\\
      &\stackrel{(a)}{<} \bar{Q}^{\pi^+}_{h}(s, \pi_{h}^+(s))-\epsilon,
\end{split}
\end{equation}
where $(a)$ follows from the fact that \eqref{eq:valueFunct_1_1} holds for $h+1$. Hence, the first step of the proof follows. 

Additionally, the attack satisfies the constraint that $r_t^o(s, a_t(h))\in [0,1]$. For $a_t(h)\neq \pi^+_h(s_t(h))$, we have
\begin{equation}
\begin{split}
    r_t^o(s_t(h), a_t(h))&=\bar{Q}^{\pi^+}_{h}(s_t(h), \pi_{h}^+(s_t(h)))-\mathbb{E}_{s^\prime\sim P(s^\prime|s_t(h),\pi_{h}^+(s_t(h)))}[\bar V_{h+1}^{\pi^+}(s^\prime)]-\epsilon,\\
    &\stackrel{(a)}{=}\mu(s_t(h),\pi_{h}^+(s_t(h)))-\epsilon,\\
    &\stackrel{(b)}{\geq}0,
\end{split}
\end{equation}
where $(a)$ follows from the fact that
\begin{equation}
    \bar{Q}^{\pi^+}_{h}(s_t(h), \pi_{h}^+(s_t(h)))=\mu(s_t(h),\pi_{h}^+(s_t(h)))+E_{s^\prime\sim P(s^\prime|a_t(h),\pi_{h}^+(s_t(h)))}[\bar V_{h+1}^{\pi^+}(s^\prime)],
\end{equation}
and $(b)$ follows from the fact that $0<\epsilon\leq \min_{h\leq H, s\in\mathcal{S}}\mu(s,\pi^+_h(s))$. Additionally, we have 
\begin{equation}
    r_t^o(s_t(h),a_t(h))=\mu(s_t(h),a_t(h))-\epsilon\leq 1,
\end{equation}
since $\mu(s_t(h),a_t(h))\in (0,1]$. 

Let $\Delta(a)=\min_{s,h,\pi} \bar{Q}_h^{\pi^+}(s,\pi^+_h(s))-\bar{Q}_h^{\pi}(s,a)$. Using \eqref{eq:qfunction1_1}, we have that 
\begin{equation}\label{eq:deltaLB_1_2}
\Delta(a)\geq \epsilon.     
\end{equation}
Now, using \eqref{eq:deltaLB_1_2}, we have that
\begin{equation}\label{eq:subOpttimal_1_1}
\begin{split}
        \sum_{t=1}^{T}\sum_{h=1}^H\epsilon\mathbf{1}(a_t(h)\neq \pi_h^+(s_t(h)))&\leq R^{\mathcal{A}}(T,H),\\
        &=\tilde O(\sqrt{T}H^\alpha),
\end{split}
\end{equation}
with probability $1-\delta$, where the last inequality follows from \eqref{eq:regA_12}. This along with \eqref{eq:trasitionCorruption_in} and \eqref{eq:RL_fullInfo2_in} implies that \eqref{eq:NumFinalRes_1.2} and \eqref{eq:numActionMannipulation} follows since the contamination happens only if $a_t(h)\neq \pi_h^+(s)$.

Additionally, for all $h\leq H$ and $(s_t(h),a_t(h))\in \mathcal{S}\times \mathcal{A}$ such that $a_t(h)\neq \pi^+_h(s)$,  we have that the amount of contamination is at most one, which implies \eqref{eq:conntaminBB_1.2} follows.

Finally, we have that with probability $1-\delta$
\begin{equation}
\begin{split}
       \sum_{t=1}^T\sum_{h=1}^H\mathbf{1}(a_t(h)= \pi_h^+(s_{t}(h)))&=TH-\sum_{t=1}^T\sum_{h=1}^H\mathbf{1}(a_t(h)\neq \pi_h^+(s_{t}(h)))\\
       &=\Omega(T),
\end{split}
\end{equation}
where the last equality follows from \eqref{eq:subOpttimal_1_1}.  Hence, the statement of the theorem follows. 
\end{proof}


\section{Proof of Theorem \ref{thm:boundedBlackBox}}
\begin{proof}
First, we will show that the optimal policy under action and reward manipulation attack in \eqref{eq:trasitionCorruption_in} and \eqref{eq:RL_fullInfo2_in} is $\pi^+$, namely  for all $\pi\neq\pi^+$, $h\leq H$ and $s\in \mathcal{S}$, we have
\begin{equation}\label{eq:valueFunct_1_2}
    \bar{V}_h^{\pi^+}(s)> \bar{V}_h^\pi (s).
\end{equation}
We will show this by induction. We will that that \eqref{eq:valueFunct_1_2} holds for $h=H$. Then, we will show that \eqref{eq:valueFunct_1_2} holds for $h<H$  if it holds for $h+1$. 
At $h=H$, for all $\pi$, using  \eqref{eq:trasitionCorruption_unin} and \eqref{eq:RL_fullInfo2_uninb}, we have that  
\begin{equation}
    \bar{Q}_{H}^\pi(s,a)=\begin{cases} \mu(s,a)&\mbox{ if } a=\pi_{H}^+(s),\\
      0 & otherwise,
    \end{cases}
\end{equation}
since episode terminates at step $H$. This implies that for $h=H$, we have that \eqref{eq:valueFunct_1_2} holds, and for all $(s,a)\in \mathcal{S}\times \mathcal{A}$ such that $a\neq \pi_H^+(s)$, we have
\begin{equation}
    \bar{Q}_{H}^\pi(s,a)= 0.
\end{equation}
Now, consider any $h<H$. Let \eqref{eq:valueFunct_1_2} holds for $h+1$. Using  \eqref{eq:trasitionCorruption_unin} and \eqref{eq:RL_fullInfo2_uninb}, for all $\pi$, we have that
{
\begin{equation}\label{eq:QValue_2}
    \bar{Q}^\pi_{h}(s,a)=\begin{cases} \mu (s,a)+\mathbb{E}_{s^\prime\sim P(s^\prime|s,a)}[\bar{V}^\pi_{h+1}(s^\prime)]&\mbox{ if } a= \pi_{h}^+(s),\\
        0+ \mathbb{E}_{s^\prime\sim P(s^\prime|s,\pi_{h}^+(s))}[\bar{V}^\pi_{h+1}(s^\prime)]&\mbox{ if } otherwise.
    \end{cases}
\end{equation}
}
Since \eqref{eq:valueFunct_1_2} holds for $h+1$, we have that for $a=\pi_h^+(s)$,
\begin{equation}
    \bar{Q}^\pi_{h}(s,a)<\mu (s,a)+\mathbb{E}_{s^\prime\sim P(s^\prime|s,a)}[\bar{V}^{\pi^+}_{h+1}(s^\prime)]=\bar{Q}^{\pi^+}_{h}(s,a).
\end{equation}
Additionally, for $a\neq\pi^+_{h}(s)$, we have
\begin{equation}\label{eq:qfunction1_2}
\begin{split}
      \bar{Q}^\pi_{h}(s,a)&= \mathbb{E}_{s^\prime\sim P(s^\prime|s,\pi_{h}^+(s))}[\bar{V}^\pi_{h+1}(s^\prime)],\\
      &\stackrel{(a)}{<} \mathbb{E}_{s^\prime\sim P(s^\prime|s,\pi_{h}^+(s))}[\bar{V}^{\pi^+}_{h+1}(s^\prime)],\\
      &\stackrel{(b)}{<} \bar{Q}^{\pi^+}_{h}(s, \pi_{h}^+(s)),
\end{split}
\end{equation}
where $(a)$ follows from the fact that \eqref{eq:valueFunct_1_2} holds for $h+1$, and $(b)$ follows from the definition of $\bar{Q}^{\pi^+}_{h}(s, \pi_{h}^+(s))$. Hence, the first step of the proof follows. 

Additionally, the attack satisfies the constraint that $r_t^o(s, a_t(h))\in [0,1]$. 

Let $\Delta(a)=\min_{s,h,\pi} \bar{Q}_h^{\pi^+}(s,\pi^+_h(s))-\bar{Q}_h^{\pi}(s,a)$. Using the fact that $r^o_t(s_t(h),a_t(h))=0$ if $a_t(h)\neq \pi^+_h(s_t(h))$ , we have that 
\begin{equation}\label{eq:deltaLB_1_1}
\Delta(a)\geq \min_{h,s}\mu(s,\pi^+_h(s)).     
\end{equation}
Now, using \eqref{eq:deltaLB_1_1}, we have that
\begin{equation}\label{eq:subOpttimal_1_2}
\begin{split}
        \sum_{t=1}^{T}\sum_{h=1}^H\min_{h,s}\mu(s,\pi^+_h(s))\mathbf{1}(a_t(h)\neq \pi_h^+(s_t(h)))&\leq R^{\mathcal{A}}(T,H),\\
        &=\tilde O(\sqrt{T}H^\alpha),
\end{split}
\end{equation}
with probability $1-\delta$, where the last inequality follows from \eqref{eq:regA_13}. This along with \eqref{eq:trasitionCorruption_unin} and \eqref{eq:RL_fullInfo2_uninb} implies that  part $2.$ and $4.$ follows since the contamination happens only if $a_t(h)\neq \pi_h^+(s)$.

Additionally, for all $h\leq H$ and $(s_t(h),a_t(h))\in \mathcal{S}\times \mathcal{A}$ such that $a_t(h)\neq \pi^+_h(s)$,  we have that the amount of contamination is at most one, which implies part $3.$ follows.

Finally, we have that with probability $1-\delta$,
\begin{equation}
\begin{split}
       \sum_{t=1}^T\sum_{h=1}^H\mathbf{1}(a_t(h)= \pi_h^+(s_{t}(h)))&=TH-\sum_{t=1}^T\sum_{h=1}^H\mathbf{1}(a_t(h)\neq \pi_h^+(s_{t}(h)))\\
       &=\Omega(T),
\end{split}
\end{equation}
where the last equality follows from \eqref{eq:subOpttimal_1_2}.  Hence, the statement of the theorem follows. 
\end{proof}


\section{White-Box Attack in Unbounded Reward Setting}\label{appendix:unbounded-white}
In white-box attack setting, the attacker posses the knowledge about the expected reward and the transition dynamics of the MDP. In this section, we propose a whitebox attack which utilizes this information about the MDP, and achieves an order optimal attack cost. This attack is different from the white-box attack proposed in  \cite{rakhsha2020policy}, and is adapted to the white-box setting in episodic RL. 

Given the target policy   $\pi^+$ and an input parameter $\epsilon>0$, for all $s_t(h)\in \mathcal{S}$, $a_t(h)\in\mathcal{A}$ and $h\leq H$, our proposed attack strategy is  
{
\begin{equation}\label{eq:attackStrategy_RL_fullInfo2}
    r_t^o(s_t(h), a_t(h))=\begin{cases} r_t(s_t(h),a_t(h))&\mbox{ if } a_t(h)= \pi_{h}^+(s_t(h)),\\
      \tilde{Q}^{\pi^+}_h(s_t(h), \pi_{h}^+(s_t(h)))-\mathbb{E}_{s^\prime\sim P(s^\prime|s_{t}(h),a_t(h))}[\tilde V_{h+1}^{\pi^+}(s^\prime)]-\epsilon& otherwise.
    \end{cases}
\end{equation}
}
where $\tilde{Q}^{\pi}_h(s,a)$ is the expected reward in state $s$ for action $a$ for the above reward observation under policy $\pi$, and $\tilde{V}^{\pi}_h(s)$ is the expected reward in state $s$ for the above reward observation under policy $\pi$. These values will not be same as the ones defined in \eqref{eq:ValueFunc} and \eqref{eq:Qfunnc} since the reward observations are manipulated. We remark that the $r_t^o(s_t(h), a_t(h))$ can be computed through a backward induction procedure starting from horizon $H$. At any step $h$ in the episode, the definition of $r_t^o(s_t(h), a_t(h))$ depends linearly on the Q-values at $h$, which then depends linearly on $r_t^o(s_t(h), a_t(h))$. Therefore, $r_t^o(s_t(h), a_t(h))$ at any horizon $h$ can be computed by solving a linear system. 

During the attack in \eqref{eq:attackStrategy_RL_fullInfo2}, the reward observations are manipulated only if the action selected by the learner is not the same as the desired action by $\pi^+$. The above reward manipulation strategy ensures that the target policy $\pi^+$ is the optimal policy based on the observed reward observations, namely for all $h\leq H$, and $(s,a)\in \mathcal{S}\times\mathcal{A}$ such that $a\neq \pi_h^+(s)$, we have
\begin{equation}
    \tilde{Q}_{h}(s,a)\leq \tilde{Q}_{h}(s,\pi^+_h(s))-\epsilon. 
\end{equation}
This implies that the parameter $\epsilon$ in the above attack can be tuned to obtain a desired difference between the expected rewards of the optimal policy and any other policy. This requirement been studied in form of $\epsilon$-robust policy in \cite{rakhsha2020policy}. 

Following theorem provides an upper bound on the amount of contamination for an order-optimal learning algorithm.
\begin{theorem}\label{thm:unBoundedWhiteBox}
For any learning algorithm whose regret in the absence of attack is given by
\begin{equation}\label{eq:regretUB}
    R^{\mathcal{A}}(T,H)=\tilde O(\sqrt{T} H^\alpha), 
\end{equation}
with probability at least $1-\delta$, where $\alpha\geq 1$ is a numerical constant; and for any sub-optimal target policy $\pi^+$ and $\epsilon>0$, if an attacker follows strategy \eqref{eq:attackStrategy_RL_fullInfo2}, then with probability at least $1-\delta$, the number of reward manipulation attacks will be 
\begin{equation}\label{eq:result1}
    \sum_{t=1}^T\sum_{h=1}^H \mathbf{1}(\epsilon_{t,h}(s_t(h),a_t(h))\neq 0)=\tilde O\big({\sqrt{T} H^\alpha}/{\epsilon}\big),
\end{equation}
the amount of contamination
\begin{equation}\label{eq:amounntContamination}
  \sum_{t=1}^T\sum_{h=1}^H |\epsilon_{t,h}(s_t(h),a_t(h))|= \tilde{O}\big(\sqrt{T} H^{\alpha+1}+{\sqrt{T} H^{\alpha+1}}/{\epsilon}\big),
\end{equation}
$\sum_{t=1}^T\sum_{h=1}^H\mathbf{1}(a_t(h)= \pi_h^+(s_{t}(h)))=\Omega(T)$, and attacker achieves its objective in \eqref{eq:ObjectAttacker}. 
\end{theorem}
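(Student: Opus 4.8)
The plan is to mirror the two-stage template used in the proof of Theorem~\ref{thm:boundedWhiteBox}. \emph{Stage one}: show that the attack \eqref{eq:attackStrategy_RL_fullInfo2} turns the environment into a new MDP (same transition kernel, same rewards on target actions, sub-Gaussian rewards of variance proxy $\sigma^2$) in which $\pi^+$ is optimal with a uniform per-action suboptimality gap of at least $\epsilon$. Concretely, I would prove by backward induction on $h$ from $H$ down to $1$ that for every policy $\pi$, every $h\le H$, every $s\in\mathcal S$, and every $a\neq\pi^+_h(s)$,
\begin{equation}
  \tilde{Q}^{\pi}_h(s,a)\le \tilde{Q}^{\pi^+}_h(s,\pi^+_h(s))-\epsilon
  \qquad\text{and}\qquad
  \tilde{V}^{\pi}_h(s)\le \tilde{V}^{\pi^+}_h(s),
\end{equation}
with the second inequality strict whenever $\pi_h(s)\neq\pi^+_h(s)$. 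The base case $h=H$ is immediate from \eqref{eq:attackStrategy_RL_fullInfo2} since the episode terminates at $H$, so the contaminated mean of any non-target action is $\mu(s,\pi^+_H(s))-\epsilon$. For the inductive step, substituting \eqref{eq:attackStrategy_RL_fullInfo2} into the Bellman recursion gives, for $a\neq\pi^+_h(s)$,
\begin{equation}
  \tilde{Q}^{\pi}_h(s,a)=\tilde{Q}^{\pi^+}_h(s,\pi^+_h(s))-\epsilon+\mathbb E_{s'\sim\mathbb P(\cdot\mid s,a)}\!\big[\tilde{V}^{\pi}_{h+1}(s')-\tilde{V}^{\pi^+}_{h+1}(s')\big],
\end{equation}
and the expectation is $\le 0$ by the induction hypothesis; the target action is handled exactly as in Theorem~\ref{thm:boundedWhiteBox}. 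Reading the strict inequality at $h=1$ yields the attacker's objective \eqref{eq:ObjectAttacker}.

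\emph{Stage two}: charge the manipulations to the learner's regret in the attacked MDP. The key preliminary observation is that the attacked MDP is an \emph{admissible} instance whose mean-reward function is bounded: the attack never fires on $\pi^+$-trajectories, so $\tilde{V}^{\pi^+}_h$ and $\tilde{Q}^{\pi^+}_h(\cdot,\pi^+_h(\cdot))$ coincide with the true $V^{\pi^+}_h$, $Q^{\pi^+}_h$ and hence have magnitude at most $HM$; by \eqref{eq:attackStrategy_RL_fullInfo2} every contaminated mean then has magnitude $O(HM+\epsilon)$. Therefore the order-optimality hypothesis \eqref{eq:regretUB} applies to the learner run inside the attacked MDP, so with probability $\ge 1-\delta$ its regret there is $\tilde{O}(\sqrt{T}H^{\alpha})$. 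By a standard telescoping (performance-difference) argument applied in the attacked MDP together with the $\epsilon$-gap from stage one, each episode contributes at least $\epsilon$ times the expected number of its steps on which a non-target action is played; summing over episodes, passing from the expected count to the realized count via an Azuma--Hoeffding bound, and invoking the regret bound gives $\sum_{t,h}\mathbf{1}(a_t(h)\neq\pi^+_h(s_t(h)))=\tilde{O}(\sqrt{T}H^{\alpha}/\epsilon)$. Since the attack is active only on such steps, this is \eqref{eq:result1}, and it also yields $\sum_{t,h}\mathbf{1}(a_t(h)=\pi^+_h(s_t(h)))=TH-\tilde{O}(\sqrt{T}H^{\alpha}/\epsilon)=\Omega(T)$.

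For the contamination amount, on each active step $|\epsilon_{t,h}|\le|r^o_{t,h}|+|r_{t,h}|$, where $|r^o_{t,h}|=O(HM+\epsilon)$ by the bound on the contaminated means and, by a sub-Gaussian maximal inequality plus a union bound over the $TH$ rounds, $|r_{t,h}|=O(M+\sigma\sqrt{\log(TH/\delta)})$ uniformly with high probability. Multiplying the per-step bound $O(HM+\epsilon+\sigma\sqrt{\log(TH/\delta)})$ by the number of active steps $\tilde{O}(\sqrt{T}H^{\alpha}/\epsilon)$ and treating $M,\sigma$ as constants gives \eqref{eq:amounntContamination}. A final union bound over the regret-failure event and the concentration events, of total probability $O(\delta)$, closes the argument.

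The part I expect to be the main obstacle is justifying the transfer of the regret guarantee in stage two: the hypothesis \eqref{eq:regretUB} is stated for the learner \emph{without} attack, so one must argue that the attacked environment is a legitimate instance of the class the algorithm is designed for --- which is precisely where the a priori $O(HM+\epsilon)$ bound on the contaminated means is needed, and which is more delicate here than in the bounded-reward setting of Theorem~\ref{thm:boundedWhiteBox} where the manipulated rewards stay in $[0,1]$ by construction. A related subtlety is that $R^{\mathcal A}(T,H)$ is defined through value functions (expectations over trajectories) while the manipulation count is a realized quantity, so the Azuma--Hoeffding step bridging the two must be carried out carefully; and the unbounded rewards force the extra uniform bound on $|r_{t,h}|$ that does not arise in the bounded case.
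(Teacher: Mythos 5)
Your proposal follows essentially the same route as the paper's proof: a backward induction establishing that $\pi^+$ is optimal in the contaminated MDP with a uniform $\epsilon$-gap, followed by charging each non-target step $\epsilon$ against the regret bound to count manipulations, and a per-step $O(H)$ bound on the contamination magnitude. Your stage two is in fact somewhat more careful than the paper's (which directly writes $\sum_{t,h}\epsilon\,\mathbf{1}(a_t(h)\neq\pi^+_h(s_t(h)))\le R^{\mathcal A}(T,H)$ and bounds $|\epsilon_{t,h}|$ via the mean $\mu$ rather than the realized sub-Gaussian reward), but the argument is the same in substance.
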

\begin{proof}
First, we will show that the optimal policy under the reward manipulation attack in \eqref{eq:attackStrategy_RL_fullInfo2} is $\pi^+$, namely for all $\pi\neq\pi^+$, $h\leq H$ and $s\in \mathcal{S}$, we have
\begin{equation}\label{eq:valueFunct}
    \tilde{V}_h^{\pi^+}(s)> \tilde{V}_h^\pi (s).
\end{equation}
We will show this by induction. We will show that \eqref{eq:valueFunct} holds for $h=H$. Then, we will show that \eqref{eq:valueFunct} holds for $h<H$  if it holds for $h+1$. 
At $h=H$, for all $\pi$, using \eqref{eq:attackStrategy_RL_fullInfo2},  we have that 
\begin{equation}
    \tilde{Q}_{H}^\pi(s,a)=\begin{cases} \mu(s,a)&\mbox{ if } a=\pi_{H}^+(s),\\
      \mu (s,\pi_{H}^+(s))-\epsilon&\mbox{ if } otherwise.
    \end{cases}
\end{equation}
This implies that for $h=H$, we have that \eqref{eq:valueFunct} holds, and for all $(s,a)\in \mathcal{S}\times \mathcal{A}$ such that $a\neq \pi_H^+(s)$, we have
\begin{equation}
    \tilde{Q}_{H}^\pi(s,a)= \tilde{Q}_{H}^{\pi^+}(s,\pi_H^+(s))-\epsilon.
\end{equation}
Now, consider any $h<H$. Let \eqref{eq:valueFunct} holds for $h+1$. 
Using \eqref{eq:attackStrategy_RL_fullInfo2}, for all $\pi$, we have that 
{
\begin{equation}\label{eq:QValue}
    \tilde{Q}^\pi_{h}(s,a)=\begin{cases} \mu (s,a)+\mathbb{E}_{s^\prime\sim P(s^\prime|s,a)}[\tilde{V}^\pi_{h+1}(s^\prime)]&\mbox{ if } a= \pi_{h}^+(s),\\
        \tilde{Q}^{\pi^+}_{h}(s, \pi_{h}^+(s))-\mathbb{E}_{s^\prime\sim P(s^\prime|s,a)}[\tilde V_{h+1}^{\pi^+}(s^\prime)]+ \mathbb{E}_{s^\prime\sim P(s^\prime|s,a)}[\tilde{V}^\pi_{h+1}(s^\prime)]-\epsilon&\mbox{ if } otherwise.
    \end{cases}
\end{equation}
}
Since \eqref{eq:valueFunct} holds for $h+1$, we have that for $a=\pi^+_{h}(s)$
\begin{equation}
    \tilde{Q}^\pi_{h}(s,a)< \mu (s,a)+\mathbb{E}_{s^\prime\sim P(s^\prime|s,a)}[\tilde{V}^{\pi^+}_{h+1}(s^\prime)]=\tilde{Q}^{\pi^+}_{h}(s,a).
\end{equation}
Additionally, for $a\neq\pi^+_{h}(s)$, we have
\begin{equation}\label{eq:qfunction1}
\begin{split}
      \tilde{Q}^\pi_{h}(s,a)&= \tilde{Q}^{\pi^+}_{h}(s, \pi_{h}^+(s))-\mathbb{E}_{s^\prime\sim P(s^\prime|s,a)}[\tilde V_{h+1}^{\pi^+}(s^\prime)]+ \mathbb{E}_{s^\prime\sim P(s^\prime|s,a)}[\tilde{V}^\pi_{h+1}(s^\prime)]-\epsilon,\\
      &=\tilde{Q}^{\pi^+}_{h}(s, \pi_{h}^+(s))+ \mathbb{E}_{s^\prime\sim P(s^\prime|s,a)}[\tilde{V}^\pi_{h+1}(s^\prime)- \tilde{V}_{h+1}^{\pi^+}(s^\prime)]-\epsilon,\\
      &\stackrel{(a)}{<} \tilde{Q}^{\pi^+}_{h}(s, \pi_{h}^+(s))-\epsilon,
\end{split}
\end{equation}
where $(a)$ follows from the fact that \eqref{eq:valueFunct} holds for $h+1$. Hence, the first step of the proof follows. 

Let $\Delta(a)=\min_{s,h,\pi} \tilde{Q}_h^{\pi^+}(s,\pi^+_h(s))-\tilde{Q}_h^{\pi}(s,a)$. Using \eqref{eq:qfunction1}, for $a\neq \pi^+_h(s)$,   we have that 
\begin{equation}\label{eq:deltaLB}
\Delta(a)\geq \epsilon.     
\end{equation}
Now, using \eqref{eq:deltaLB}, we have that
\begin{equation}\label{eq:subOpttimal}
\begin{split}
        \sum_{t=1}^{T}\sum_{h=1}^H\epsilon\mathbf{1}(a_t(h)\neq \pi_h^+(s_t(h)))&\leq R^{\mathcal{A}}(T,H),\\
        &=\tilde O(\sqrt{T}H^\alpha),
\end{split}
\end{equation}
with probability $1-\delta$, where the last inequality follows from \eqref{eq:regretUB}. This along with \eqref{eq:attackStrategy_RL_fullInfo2} implies that \eqref{eq:result1} follows since the contamination happens only if $a_t(h)\neq \pi_h^+(s)$. 

Additionally, for all $h\leq H$ and $(s_t(h),a_t(h))\in \mathcal{S}\times \mathcal{A}$ such that $a_t(h)\neq \pi^+_h(s)$,  we have that the amount of contamination is
\begin{equation}
\begin{split}
   &\big| \tilde{Q}^{\pi^+}_h(s_t(h), \pi_{h}^+(s_t(h)))-\mathbb{E}_{s^\prime\sim P(s^\prime|s_{t}(h),a_t(h))}[\tilde V_{h+1}^{\pi^+}(s^\prime)]-\epsilon -\mu(s_t(h),a_t(h))]\big|\\
   &\leq \big| \tilde{Q}^{\pi^+}_h(s_t(h), \pi_{h}^+(s_t(h)))-\mathbb{E}_{s^\prime\sim P(s^\prime|s_{t}(h),a_t(h))}[\tilde V_{h+1}^{\pi^+}(s^\prime)]\big|+\epsilon +\max_{s,a}\big|\mu(s,a)\big|\\
   &\leq (H+1)\max_{s,a}\big|\mu(s,a)\big|+\epsilon.
\end{split}
\end{equation}
This along with \eqref{eq:subOpttimal} implies that with probability $1-\delta$,
\begin{equation}
    \sum_{t=1}^{T}\sum_{h=1}^H|\epsilon_{t,h}(s_t(h),a_t(h))|= \tilde O\bigg(((H+1)\max_{s,a}\big|\mu(s,a)\big|+\epsilon)\frac{\sqrt{T}H^\alpha}{\epsilon}\bigg).
\end{equation}
Hence, we have that \eqref{eq:amounntContamination} follows. Finally, we have that with probability $1-\delta$
\begin{equation}
\begin{split}
       \sum_{t=1}^T\sum_{h=1}^H\mathbf{1}(a_t(h)= \pi_h^+(s_{t}(h)))&=TH-\sum_{t=1}^T\sum_{h=1}^H\mathbf{1}(a_t(h)\neq \pi_h^+(s_{t}(h)))\\
       &=\Omega(T),
\end{split}
\end{equation}
where the last equality follows from \eqref{eq:subOpttimal}. Hence, the statement of the theorem follows. 

\end{proof}

Similar to the white-box attack in \cite{rakhsha2020policy}, Theorem \ref{thm:unBoundedWhiteBox} shows that the attacker can achieve its objective in  $\tilde O(\sqrt{T})$ attack cost in episodic RL. The attack cost is of the same order as the lower bound in \cite[Theorem 1]{rakhsha2020policy}.

\section{Proof of Theorem \ref{thm:BBAtackun}}\label{append:thm-proof-unbounded-black}
The following lemma will be used in our proof. 

\begin{lemma}\label{lemma:ucbLcbRL}
For all $(s,a)\in \mathcal{S}\times\mathcal{A}$, $T\geq 1$ and $H\geq 1$, we have that
\begin{equation}
    \mathbb{P}\bigg(|\hat\mu(s,a)-\mu(s,a)|> \sigma\sqrt{4\log(2THSA)/N_{t,h}(s,a)}\bigg)\leq  \frac{1}{(THSA)^2}.
\end{equation}
\end{lemma}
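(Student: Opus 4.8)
# Proof Proposal for Lemma~\ref{lemma:ucbLcbRL}

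The plan is to reduce the claim to a standard sub-Gaussian concentration inequality applied to the running empirical mean, and then union-bound over the number of samples. First I would fix a state-action pair $(s,a)$ and observe that the rewards collected for this pair across all episodes and steps — call them $r^{(1)}(s,a), r^{(2)}(s,a), \dots$ — are i.i.d. draws from a sub-Gaussian distribution with mean $\mu(s,a)$ and parameter $\sigma$, since the reward distribution depends only on the (manipulated) action and is independent of the history given that action. The quantity $\hat\mu(s,a)$ maintained in \eqref{eq:mu1} is exactly the empirical average $\frac{1}{n}\sum_{i=1}^n r^{(i)}(s,a)$ after $n = N_{t,h}(s,a)$ samples have been gathered.

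The key step is the sub-Gaussian tail bound: for a fixed number of samples $n$, the empirical mean $\bar\mu_n$ satisfies
\begin{equation}
    \mathbb{P}\big(|\bar\mu_n - \mu(s,a)| > u\big) \leq 2\exp\!\big(-n u^2 / (2\sigma^2)\big).
\end{equation}
Setting $u = \sigma\sqrt{4\log(2THSA)/n}$ makes the exponent equal to $-2\log(2THSA)$, so the right-hand side becomes $2 (2THSA)^{-2} \le (THSA)^{-2}$ (using $2/(2THSA)^2 = (1/2)(THSA)^{-2} \le (THSA)^{-2}$; in fact the slack is comfortable). Since $N_{t,h}(s,a)$ is itself random, I would pass from the fixed-$n$ statement to the random-$n$ statement either by noting that the bound is being invoked only at the specific realized count and the event in the lemma is stated for that count, or — to be fully rigorous — by a union bound over $n \in \{1, \dots, THSA\}$ (the total number of pulls is at most $TH$), which costs an extra factor of $TH$ inside the log that is already absorbed by working with $\log(2THSA)$ and a slightly larger constant. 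The cleanest route is the maximal/peeling version: the event $\{\exists n \le TH : |\bar\mu_n - \mu| > \sigma\sqrt{4\log(2THSA)/n}\}$ has probability at most $\sum_{n=1}^{TH} 2(2THSA)^{-2} \le (THSA)^{-2}$ after adjusting constants, which is exactly the form needed to later take a union bound over all $SA$ pairs to get the $2/(HSAT)$-type failure probability used in Theorem~\ref{thm:BBAtackun}.

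The main obstacle — really the only subtlety — is the adaptivity: the number of times $(s,a)$ is sampled and the times at which those samples occur are determined by the interaction between the learner, the attacker, and the MDP, so the samples are drawn at stopping times rather than at deterministic indices. The standard fix is that, conditioned on the $i$-th visit to $(s,a^o)$ occurring, the reward drawn at that visit is still a fresh sub-Gaussian$(\mu(s,a),\sigma)$ variable independent of everything before it; hence the sequence $\{r^{(i)}(s,a)\}_i$ forms a martingale-difference-type sequence and Azuma–Hoeffding for sub-Gaussian increments (or a direct filtration argument) gives the same tail. I would state this reduction explicitly and then the rest is the routine substitution above. Everything else — choosing the constant $4$ in the log, checking $2 \cdot (2THSA)^{-2} \le (THSA)^{-2}$ — is elementary arithmetic that I would not belabor.
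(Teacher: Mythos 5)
Your proposal takes the same route as the paper, whose entire proof of this lemma is the one-line remark that it follows from the standard concentration inequality for sub-Gaussian random variables; your write-up is that remark carried out in detail, and your identification of the adaptive-sampling issue (the visit count $N_{t,h}(s,a)$ is a random stopping count, so one must either peel over the possible values of $n$ or argue via a filtration) is a genuine subtlety the paper glosses over entirely. One quantitative claim in your argument does not hold as written, however: with $u=\sigma\sqrt{4\log(2THSA)/n}$ the per-$n$ tail is $2\exp(-2\log(2THSA))=\tfrac{1}{2}(THSA)^{-2}$, so the union bound over $n\le TH$ yields $TH/(2(THSA)^2)$, which exceeds $(THSA)^{-2}$ whenever $TH>2$; the factor $TH$ sits outside the exponential and is not absorbed by the constant $4$ already inside the logarithm. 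To make the peeling version close you would need to enlarge the deviation radius (e.g.\ to $\sigma\sqrt{6\log(2THSA)/n}$) or accept the weaker per-pair bound $TH/(2(THSA)^2)$ --- which, after the subsequent union over the $SA$ pairs, still delivers the $O(1/(THSA))$ failure probability for the event $\mathcal{E}_2$ that Theorem~\ref{thm:BBAtackun} actually uses, so nothing downstream breaks; it is a constant-level discrepancy in the lemma statement rather than a gap in the overall argument.
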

\begin{proof}
The above lemma following using concentration inequality for sub-gaussian random variable. 
\end{proof}

\begin{proof}[Proof of Theorem \ref{thm:BBAtackun}]
Since $\mu(s,a) \in [-M, M]$ is bounded, if $N(s,a)=0$, then we have
\begin{equation}\label{eq:con111}
    \hat{\mu}^{LCB}(s,a)\leq \mu(s,a)\leq \hat{\mu}^{UCB}(s,a).
\end{equation}

Using the fact that reward observation are $\sigma^2$-subgaussian random variable, we have that
\begin{equation}\label{eq:con111-2}
    \mathbb{P}\bigg(|r_t^o(s,a)-\mu(s,a)|>\sigma\sqrt{4\log(2HSAT)}\bigg)\leq 1/(HSAT)^2. 
\end{equation}
 Let the event
\begin{equation}
    \mathcal{E}_2=\{\forall 1\leq t\leq T, h\leq H, s\in \mathcal{S},a\in\mathcal{A}: \hat{\mu}^{LCB}_{t,h}(s,a)\leq \mu(s,a)\leq \hat{\mu}_{t,h}^{UCB}(s,a)\},
\end{equation}
where $\hat{\mu}^{LCB}_{t,h}(s,a)$ and $\hat{\mu}_{t,h}^{UCB}(s,a)$ is the estimate $\hat{\mu}^{LCB}(s,a)$ and $\hat{\mu}^{UCB}(s,a)$ in Algorithm \ref{alg:attackRLUninnf} in episode $t$ at step $h$. 

Using Lemma \ref{lemma:ucbLcbRL} and \eqref{eq:con111}, we have that
\begin{equation}\label{eq:eventComplementProb}
    \mathbb{P}\big(\bar{\mathcal{E}}_2\big)\leq 1/(HSAT).
\end{equation}

If event $\mathcal{E}_2$ occurs, then for all $1<t\leq T$ and $h\leq H$, we have that
\begin{equation}\label{eq:uppbound1}
\begin{split}
    &\hat{\mu}_{t,h}^{LCB}(s_{t}(h),\pi_h^+(s_t(h)))+(H-h) \min_{s,a\in \mathcal{S}\times \mathcal{A}}\hat{\mu}_{t,h}^{LCB}(s,a)\\
    &\leq {\mu}(s_{t}(h),\pi_h^+(s_t(h)))+(H-h)\min_{s,a\in S\times A}{\mu}(s,a),\\
    &\leq \tilde{Q}^{\pi^+}_h(s_t(h), \pi_h^+(s_t(h))).
\end{split}
\end{equation}
Also, if event $\mathcal{E}_2$ occurs, then for all $1<t\leq T$ and $h\leq H$, we have that
\begin{equation}\label{eq:uppbound2}
\begin{split}
    (H-h) \max_{s,a\in \mathcal{S}\times \mathcal{A}}\hat{\mu}_{t,h}^{UCB}(s,a)&\geq (H-h)\max_{s,a\in S\times A}{\mu}(s,a)\\
    &\geq \mathbb{E}_{s^\prime\sim \mathbb{P}(s^\prime|s_{t}(h),a_t(h))}[\tilde V_{h+1}^{\pi^+}(s^\prime)]. 
\end{split}
\end{equation}
Now, combining \eqref{eq:uppbound1} and \eqref{eq:uppbound2}, under event $\mathcal{E}_2$,  for all $1<t\leq T$ and $h\leq H$, we have that
\begin{equation}\label{eq:keyResult1}
\begin{split}
    &r^o_t(s_t(h),a_t(h))\\
    &= \hat{\mu}^{LCB}_{t,h}(s_{t}(h),\pi_h^+(s_t(h))) + (H-h)  \min_{s,a\in S\times A}\hat{\mu}^{LCB}_{t,h}(s,a)-(H-h)\max_{s,a\in S\times A}\hat{\mu}^{UCB}_{t,h}(s,a)-\epsilon\\
    &\leq  \tilde{Q}^{\pi^+}_h(s_t(h), \pi_h^+(s_t(h)))-\mathbb{E}_{s^\prime\sim \mathbb{P}(s^\prime|s_{t}(h),a_t(h))}[\tilde V_{h+1}^{\pi^+}(s^\prime)]-\epsilon.
\end{split}
\end{equation}
Now, similar to \eqref{eq:valueFunct}, using \eqref{eq:keyResult1}, we can show that under event $\mathcal{E}_2$, we have
\begin{equation}\label{eq:valueFunct_1}
    V_h^{\pi^+}(s)=\sup_{\pi} V_h^\pi (s).
\end{equation}
Additionally, similar to \eqref{eq:deltaLB}, we also have that under event $\mathcal{E}_2$, we have
\begin{equation}\label{eq:deltaLB_2}
    \Delta(a)=\min_{s,h,\pi} \tilde{Q}_h^{\pi^+}(s,\pi^+_h(s))-\tilde{Q}_h^{\pi}(s,a)\geq \epsilon. 
\end{equation}
Now, using \eqref{eq:deltaLB_2}, under event $\mathcal{E}_2$, we have that with probability $1-\delta$, 
\begin{equation}\label{eq:subOpttimal_1}
\begin{split}
        \sum_{t=1}^{T}\sum_{h=1}^H\epsilon\mathbf{1}(a_t(h)\neq \pi_h^+(s_t(h)))&\leq R^{\mathcal{A}}(T,H) =\tilde O(\sqrt{T}H^\alpha).
\end{split}
\end{equation}
Thus, combining \eqref{eq:eventComplementProb} and \eqref{eq:subOpttimal_1}, we have that with probability $1-\delta-1/(THSA)$, 
\begin{equation}\label{eq:NumAttUB_2}
      \sum_{t=1}^{T}\sum_{h=1}^H\mathbf{1}(\epsilon_{t,h}(s_t(h),a_t(h)\neq 0)\leq \tilde O(\sqrt{TH}/\epsilon).
\end{equation}
The amount of contamination is 
\begin{equation}
\begin{split}
&|r^o_t(s_t(h),a_t(h))-r_t(s_t(h),a_t(h))|\\
     &\stackrel{(a)}{\leq} |r^o_t(s_t(h),a_t(h))-\mu(s_t(h),a_t(h))|+\sigma \sqrt{4\log(2HSAT)}\\
     &\leq 
     |\mu(s_t(h),a_t(h))|+|r^o_t(s_t(h),a_t(h))|+\sigma \sqrt{4\log(2HSAT)},\\
     &\stackrel{(b)}{\leq}  |\mu(s_t(h),a_t(h))|+ (2H+1)\max_{s,a}|\mu(s,a)|+\epsilon+(4H+3)\sigma \sqrt{4\log(2HSAT)}+2(H+1)M,\\
     &\leq (2H+2)\max_{s,a}|\mu(s,a)|+\epsilon+(4H+3)\sigma \sqrt{4\log(2HSAT)}+2(H+1)M,
\end{split}
\end{equation}
where $(a)$ follows from \eqref{eq:con111-2}, and $(b)$ follows under event $\mathcal{E}_2$. This implies that the total amount of contamination  is
\begin{equation}\label{eq:bounpart2}
     \sum_{t=1}^{T}\sum_{h=1}^H|\epsilon_{t,h}(s_t(h),a_t(h)|= \tilde O\bigg(\sqrt{T}H^{\alpha}(HM+\epsilon+H\sigma\sqrt{\log(HTSA)})/\epsilon\bigg),
\end{equation}
with probability $1-\delta-2/(HSAT)$. 

Using \eqref{eq:NumAttUB_2}, we have that part $2.$ follows.
Also, using \eqref{eq:bounpart2}, we have that part $3.$ follows. Finally, we have that with probability $1-\delta$
\begin{equation}
\begin{split}
       \sum_{t=1}^T\sum_{h=1}^H\mathbf{1}(a_t(h)= \pi_h^+(s_{t}(h)))&=TH-\sum_{t=1}^T\sum_{h=1}^H\mathbf{1}(a_t(h)\neq \pi_h^+(s_{t}(h)))\\
       &=\Omega(T),
\end{split}
\end{equation}
where the last equality follows from \eqref{eq:subOpttimal_1}. The statement of the theorem follows. 
\end{proof}

\section{Potential Societal Impact}
 In this paper, we evaluate the security threats to reinforcement learning (RL) algorithms. We aim
 to understand the regime where and how these algorithms can be attacked, thus revealing their vulnerability. This
 might limit the usage of these RL algorithms in sensitive application domains like cyber-physical systems. However,
 understanding the vulnerability can also help in designing better defense techniques in the future.


\end{document}